\theoremstyle{plain}
\newtheorem{theorem}{Theorem}[section]
\newtheorem{lemma}[theorem]{Lemma}
\theoremstyle{definition}
\newtheorem{assumption}[theorem]{Assumption}
\theoremstyle{remark}
\newtheorem{remark}[theorem]{Remark}
\definecolor{DarkBlue}{RGB}{22,54,93}
\newcommand{\ignore}[1]{}
\definecolor{carnelian}{rgb}{0.7, 0.11, 0.11}
\definecolor{lemonchiffon}{rgb}{1.0, 0.98, 0.8}
\definecolor{darkblue}{rgb}{0.0, 0.0, 0.55}
\definecolor{darkgreen}{RGB}{0, 120, 0}
\definecolor{lightyellow}{rgb}{1.0, 1.0, 0.8}
\definecolor{lightblue}{rgb}{0.8, 0.9, 1.0}
\newcommand{\ifcomments}{\iftrue}
\def \avg {\mathrm{avg}}
\begin{document}
\title{Stragglers Can Contribute More: Uncertainty-Aware Distillation for Asynchronous Federated Learning }

\author
{   Yujia Wang\thanks{College of Information Sciences and Technology, Pennsylvania State University;  E-mail:  {\tt yjw5427@psu.edu}.},  
    ~~Fenglong Ma\thanks{College of Information Sciences and Technology, Pennsylvania State University; E-mail: {\tt  ffm5105@psu.edu}},
	~~Jinghui Chen\thanks{College of Information Sciences and Technology, Pennsylvania State University;  E-mail:  {\tt jzc5917@psu.edu}.}
}
\date{}
 
\maketitle

\begin{abstract}
Asynchronous federated learning (FL) has recently gained attention for its enhanced efficiency and scalability, enabling local clients to send model updates to the server at their own pace without waiting for slower participants. However, such a design encounters significant challenges, such as the risk of outdated updates from straggler clients degrading the overall model performance and the potential bias introduced by faster clients dominating the learning process, especially under heterogeneous data distributions. Existing methods typically address only one of these issues, creating a conflict where mitigating the impact of outdated updates can exacerbate the bias created by faster clients, and vice versa. To address these challenges, we propose FedEcho, a novel framework that incorporates uncertainty-aware distillation to enhance the asynchronous FL performances under large asynchronous delays and data heterogeneity. Specifically, uncertainty-aware distillation enables the server to assess the reliability of predictions made by straggler clients, dynamically adjusting the influence of these predictions based on their estimated uncertainty. By prioritizing more certain predictions while still leveraging the diverse information from all clients, FedEcho effectively mitigates the negative impacts of outdated updates and data heterogeneity. Through extensive experiments, we demonstrate that FedEcho consistently outperforms existing asynchronous federated learning baselines, achieving robust performance without requiring access to private client data.

\end{abstract}

\section{Introduction}
Federated learning (FL)~\citep{mcmahan2017communication} has emerged as a promising paradigm of collaboratively training machine learning models across edge clients without sharing private data. In the standard synchronous FL setting, the central server waits for all assigned clients to complete their local training before aggregating the local updates to the global model~\citep{mcmahan2017communication, karimireddy2020scaffold, wang2022communication, wang2020tackling, wei2020federated}. However, this synchronous aggregation scheme suffers from poor scalability in heterogeneous environments due to system discrepancies: faster clients remain idle while waiting for straggler clients, leading to significant inefficiency in training~\citep{xie2019asynchronous, nguyen2022federated, yang2022anarchic}. 

Asynchronous FL alleviates the inefficiency by allowing clients to upload updates at their own pace, enabling the faster clients and the server to process global updates without waiting for all participant clients~\citep{xie2019asynchronous, nguyen2022federated, toghani2022unbounded, wang2024tackling, wang2024fadas, liu2024fedasmu}. 
Although asynchronous methods improve training efficiency, they do not come without costs. Updates from straggler clients usually suffer from asynchronous delay, as their local models are trained on an outdated global model rather than the current one.  
Consequently, incorporating such outdated updates into the global model would impede the overall convergence, especially when encountering \textit{large asynchronous delays}. 
Meanwhile, faster clients usually contribute more frequent updates due to smaller delays, and thus cause the global model to be biased towards their data distributions. This would deteriorate the overall performance in the presence of \textit{heterogeneous client data distributions}. 

Despite several prior works attempting to enhance the performance of asynchronous FL, the two issues appear to be in conflict, with existing research primarily addressing only one of them. Specifically, one line of work aims to balance the client contributions in the global update by using the historical model updates or applying gradient correction~\citep{wang2024tackling, zang2024efficient}. However, such methods directly merge the outdated parameter updates from straggler clients into the global model, which distorts the training and leads to degraded performance under large delays\footnote{While~\cite{wang2024tackling} mentioned alleviating the joint effect of asynchronous delay and client data heterogeneity, experimental results suggest that their effectiveness deteriorates under large asynchronous delays.}. Another line of work aims to reduce the effect of outdated updates by employing momentum-based training design~\citep{yu2024momentum, wang2024fadas}, thus reducing the impact brought by straggler clients. However, since the momentum terms are even more dominated by frequent updates from faster clients, the model updates of stragglers are gradually diminished, resulting in their updates being largely neglected and finally leading to convergence slowdown under severe data heterogeneity. This motivates us to explore the following question:

\begin{tcolorbox}[colback=blue!10!white, colframe=blue!75!black]
\textit{Can we solve the two above-mentioned issues simultaneously by developing a new asynchronous FL approach that works under both \textbf{large asynchronous delays} and \textbf{client data heterogeneity}?}
\end{tcolorbox}

Specifically, this necessitates a more nuanced approach to managing outdated client updates from straggler clients: on one hand, we aim to avoid directly merging or averaging these outdated updates into the global model to prevent performance degradation; on the other hand, we seek to retain the useful information from the outdated updates, as they hold valuable insights regarding the unique data distributions of the clients.

Based on this, we propose \textbf{FedEcho}, a novel uncertainty-aware distillation framework for asynchronous FL. Building upon knowledge distillation, FedEcho operates at the prediction logits level to update model parameters, thereby avoiding direct parameter contamination from straggler updates. Specifically, FedEcho aggregates client predictions into ensemble teacher logits and performs server-side distillation.  To further make sure that FedEcho could retain useful information from the outdated updates, we design \textit{uncertainty-aware distillation}: predictions obtained from straggler clients (who trained on outdated models), may still be noisy and unreliable. This motivates the need for uncertainty-aware distillation, where the server dynamically adjusts its confidence in the teacher's predictions based on prediction uncertainty. 
This makes FedEcho a robust framework that effectively leverages the insights from straggler updates while minimizing the risks associated with outdated information. By integrating uncertainty-aware distillation, FedEcho ensures that the global model benefits from the diverse data distributions of all clients, fostering more accurate and reliable learning outcomes in the presence of both large asynchronous delays and heterogeneous client data.

Our main contributions are summarized as follows:
\begin{itemize}[leftmargin=*]

    \item We propose FedEcho, a novel asynchronous FL framework that incorporates uncertainty-aware distillation to effectively address the challenges posed by large asynchronous delays and heterogeneous client data distributions. FedEcho builds upon the principles of knowledge distillation, enabling the dynamic adjustment of distillation loss based on uncertainty, while ensuring balanced contributions from all clients.  
    
    \item We provide theoretical convergence analysis of the proposed FedEcho and with a convergence rate of $\cO(\frac{1}{\sqrt{TM}})$ w.r.t. the number of global communication rounds $T$, and the number of accumulated gradients $M$.  This rate is consistent with that of asynchronous FL methods, ensuring that the involvement of uncertainty-aware distillation does not compromise the convergence.
    \item We conduct extensive experiments under varying degrees of data heterogeneity and asynchronous delay. The results demonstrate that FedEcho consistently outperforms existing asynchronous FL baselines under various experimental settings.
\end{itemize}




\section{Related Works}

\paragraph{Asynchronous FL}
Asynchronous optimizations and their various adaptations, such as Hogwild~\citep{10.5555/2986459.2986537} and asynchronous SGD \citep{mania2017perturbed, pmlr-v80-nguyen18c, stich2021critical, JMLRasync, glasgow2022asynchronous}, have been widely discussed due to their advantage in computational efficiency and scalability. Recently, asynchronous update algorithms in FL, including FedAsync \citep{xie2019asynchronous} and FedBuff~\citep{nguyen2022federated}, have also garnered significant attention, as they provide a more flexible and efficient solution for adapting to client systematic heterogeneity. Based on FedBuff~\citep{nguyen2022federated}, recent works have focused on various aspects of asynchronous FL, including theoretical analysis~\citep{toghani2022unbounded}, communication efficiency~\citep{ortega2023asynchronous}, and data heterogeneity~\citep{wang2024tackling, wang2024dual}. Moreover, recent studies have investigated the integration of advanced optimization techniques such as momentum acceleration~\citep{yu2024momentum, zang2024efficient} and adaptive optimization~\citep{wang2024fadas} within asynchronous FL frameworks.

\paragraph{Federated Distillation}
Federated distillation has been widely studied for knowledge transfer in FL, addressing data heterogeneity and model heterogeneity issues. Early works such as FedKD~\cite{jeong2018communication} transmit the mean of client logits to the server, which are then used as teacher signals for local distillation regularization. Several studies focus on enhancing robustness in the face of data heterogeneity. For example, FedDF~\citep{lin2020ensemble} ensembles soft labels from unlabeled data, \cite{itahara2021distillation} proposes entropy reduction aggregation, and \cite{zhang2022fine} proposes a data-free knowledge distillation method to fine-tune the global model on the server. Another line of work is model heterogeneity, e.g., FedMD~\citep{li2019fedmd} aligns client logits on a public dataset, FedGKT~\citep{he2020group} transfers the knowledge from small client models to a large server-side model, and FedType~\citep{wang2024bridging} deploys lightweight proxy models on clients to exchange knowledge with private large models, thereby eliminating the need for private data. There are also some prototype-based approaches~\citep{tan2022fedproto, zhang2024fedtgp}, multilevel distillation methods~\citep{khan2024hydra, hao2023one, qirethinking} for FL.

\section{The Proposed Method: FedEcho}
\subsection{Preliminaries}
Generally, in FL frameworks, we aim to minimize the following objective through $N$ local clients: 
\vspace{-5pt}
\begin{align}\label{eq:fl-basic}
    \min_{\bx \in \RR^d} f(\bx) :=\frac{1}{N} \sum_{i=1}^N F_i(\bx) = \frac{1}{N} \sum_{i=1}^N\EE_{\xi\sim\cD_i} [F_i (\bx; \xi_i)],
\end{align}
where $\bx$ represents the model parameters with $d$ dimensions, $F_i(\bx) = \EE_{\xi\sim\cD_i} [F_i (\bx, \xi_i)]$ represents the local loss function corresponding to client $i$, and $\cD_i$ denotes the local data distribution. Based on FedAvg \citep{mcmahan2017communication} and its variants, synchronous FL algorithms solve Eq. \eqref{eq:fl-basic} by allowing each participating client to perform local updates, and the server receives the local updates from assigned clients and aggregates them to update the global model.

\paragraph{Asynchronous updates with buffer.}
To alleviate this inefficiency and enhance scalability, asynchronous federated learning has been introduced as a solution for optimizing the objective in Eq.~\eqref{eq:fl-basic}. 
In asynchronous FL, clients conduct local training independently and upload their updates to the server upon completion. Building upon the first few asynchronous works such as FedAsync~\citep{xie2019asynchronous}, FedBuff~\citep{nguyen2022federated} FADAS~\citep{wang2024fadas} and CA$^2$FL~\citep{wang2024tackling}, they utilize a buffer-based mechanism to improve global updates. Such asynchronous methods maintain a fixed number of concurrency $M_c$ (the server typically ensures that $M_c$ clients are actively performing local training simultaneously, with $M_c$ usually determined by the server). Then at global round $t$, when client $i$ finishes $K$ steps of local training, it sends the update $\bm\Delta_t^i = \bx_{t-\tau,K}^i - \bx_{t-\tau}$ to the server, where $t-\tau$ indicates the global round when local training began. The server accumulates the local updates to $\bDelta_t$, and dispatches the latest global model to a randomly selected idle client for local training. Once $M$ updates are collected, the server updates the global model. Clients who are performing local training still continue with their current local models, which would be unaffected by global updates. Through this asynchronous design, the FL training system maintains a fixed number of concurrency by assigning a new idle client to training whenever one finishes. 
\subsection{FedEcho: Uncertainty-arare Distillation for Asynchronous FL }
To address the challenges related to stale update and achieve better convergence, we propose \textbf{FedEcho}, a novel uncertainty-aware distillation methods for asynchronous FL. Our intuition is that local models can provide coarse guidance on how the global model should adapt to diverse local data distributions. We summarize the proposed FedEcho in Algorithm \ref{alg:afled}. FedEcho adopts a standard local training procedure similar to \cite{nguyen2022federated, wang2024tackling}, where each client uploads its model update upon completing local training. FedEcho also maintains the concept of server concurrency $M_c$ and buffer size $M$ for flexible control of the number of active clients and the frequency of global model update. 

\textbf{Uncertainty-aware distillation. } At the server, FedEcho employs a novel distillation process to enable the global model to learn from local models. Upon receiving an update $\bDelta_t^i$ from client $i$, the server temporarily constructs a client-specific model $\bx_t^i$, performs inference on an unlabeled dataset $\cU$, and stores the resulting logits $\by^i$ for future distillation. The temporary model is discarded immediately after inference. Once the server has aggregated $M$ local updates to obtain a new global model $\hat \bx_{t+1}$, it performs knowledge distillation using the stored logits. Specifically, for each batch $B$ from the unlabeled dataset $\cU$, the server aggregates the logits from all teacher models, i.e., the most recent stored logits from each client (if available), and takes their average prediction $\by_t$ as the distillation target (Line 14).

For the distillation process, the available teacher (client) models, which represented by their stored output logits on the server, are simultaneously used for the knowledge distillation:
\begin{align} \label{eq:distill_loss}
    \hat \bx_{t+1} \leftarrow \hat \bx_{t+1} - \eta_d \cdot \text{Clip}(\nabla f_d (\by_t, \hat \by_t), \nu), \quad \hat \by_t = \text{Logits} (\hat \bx_{t+1}(B))
\end{align}
where $\eta_d$ is the distillation learning rate, $\nu$ is the clipping threshold. We adopt a novel \textit{uncertainty-aware distillation loss} $f_d$ which weighting the Kullback–Leibler (KL) divergence loss and Cross-Entropy (CE) loss,
\begin{align}
    f_d (\by_t, \hat \by_t) = \alpha \text{KL} \big(\sigma(\by_t)|| \sigma(\hat \by_t)\big) + (1-\alpha) \text{CE} \big(\hat{\by}_t, \arg\max(\by_t) \big),
\end{align}
where $\sigma$ is the softmax function. To dynamically adjust between the richer uncertainty information which encourage by KL divergence loss and the less uncertain information providing by the CE loss, we define $\alpha$ as a function of predictive uncertainty. Given teacher logits $\by_t^u$ for sample $u$, we compute and the entropy $H^u$ and the normalized batch entropy $\hat{H}$ for batch $B$:
\begin{align}
    H^u = -\sum_{c=1}^C  p_c^u \log p_c^u, \quad \hat{H} = \frac{1}{\log C} \frac{1}{|B|} \sum_{u \in B} H^u.
\end{align}
We set $\alpha$ to interpolate between $\alpha_{\min}$ and $\alpha_{\max}$ according to $\hat{H}$, i.e., $\alpha = \hat{H} \alpha_{\max} + (1-\hat{H}) \alpha_{\min}$. In general, when the batch of teacher logits exhibits high entropy, the weight $\alpha$ increases and the contribution of the CE loss is reduced. This design reflects the intuition that when teacher predictions are highly uncertain, the hard labels derived from them may be unreliable, and relying more on soft labels helps the global student model avoid learning from inaccurate or noisy signals. Conversely, when the teacher logits are more confident (i.e., with low entropy $\hat{H}$), a larger proportion of CE loss is emphasized, allowing the student model to benefit from more deterministic and trustworthy supervision.

Moreover, to prevent the imperfect guidance during the early stages of training when teacher logits may be inaccurate and fail to fully represent the local knowledge. We employ gradient clipping to constrain the magnitude of the distillation gradient as shown in Eq.~\eqref{eq:distill_loss}. This ensures that the global model can still learn from coarse signals of local models, thereby enhancing resilience to the staleness and heterogeneous data distribution.  



\begin{algorithm}[t!]
\caption{FedEcho: Uncertainty-aware distillation for asynchronous federated learning}
  \label{alg:afled}
  \begin{flushleft}
        \textbf{Input:} local learning rate $\eta_l$, global learning rate $\eta$, distillation learning rate $\eta_d$, server concurrency $M_c$, buffer size $M$
        \end{flushleft}
  \begin{algorithmic}[1]
        \STATE Initialize model $\bx_1$, initialize $\bm\Delta_1=\zero$, $m=0$, server first sample a set of $\cM_0$ with size $M_c$ of active clients to run \textbf{local SGD updates} with local learning rate $\eta_l$, logits list $\by=\zero$;
      \REPEAT
        \IF{receive client update}
        \STATE Server accumulates update from client $i$: $\bm\Delta_t \leftarrow \bm\Delta_t + \bm\Delta_{t}^{i}$ and set $m \leftarrow m+1$; 
        \STATE Server infers client $i$'s local model $\bx_t^i = \bx_{t-\tau_t^i} + \bDelta_t^i$ on the unlabeled set $\cU$ and stores the logits $\by^i $ for client $i$;
        \STATE Sample another client $j$ from available clients;
        \STATE Send the current model $\bx_t$ to client $j$, and run \textbf{local SGD updates} on client $j$;
        \ENDIF
        \IF{$m=M$}
        \STATE $\bm\Delta_t \leftarrow \frac{\bm\Delta_t}{M} $;
        
        \STATE Update global model $\hat\bx_{t+1}=\bx_t + \eta \bm\Delta_t$;
        \FOR{Server distillation steps $q=1$ to $Q$}
        \STATE Sample an unlabeled mini-batch $B$ from unlabeled set $\cU$; 
        \STATE Get the teacher logits $\by_t = \frac{1}{|\cS_u|} \sum_{i \in \cS_u} \by^i[B]$, where $\cS_u = \{i |y^i[B] \text{ is available}\}$ , and student logits $\hat \by_t = \text{Logits}(\hat\bx_{t+1}(B))$;
        \STATE Update global student model via knowledge distillation $\hat \bx_{t+1} \leftarrow \hat\bx_{t+1} - \eta_d \cdot \text{Clip}(\nabla f_d (\by_t, \hat \by_t), \nu)$;
        \ENDFOR
        \STATE Update global model $\bx_{t+1} \leftarrow \hat \bx_{t+1}$; set $m \leftarrow 0$,  $\bm\Delta_{t+1} \leftarrow \zero$, $t \leftarrow t+1$;
        \ENDIF
       
      \UNTIL{convergence}
  \end{algorithmic}
\end{algorithm}

In a nutshell, the proposed FedEcho decouples the contribution of stale clients' information from the current global model update by avoiding direct aggregate auxiliary variables to the global model. Instead, it leverages knowledge distillation to incorporate information from local models. Intuitively, FedEcho captures the predictive behavior of each client: the server learns from the averaged logits provided by local models and uses them as soft targets. This enables the global model to assimilate diverse client knowledge in a more stable and delay-tolerant manner. Importantly, even clients with large delays can still contribute meaningful label-distribution signals to the global model through their predictions. As a result, the global update is less dominated by recently updated or faster clients, thereby mitigating the negative effects of data heterogeneity and asynchronous delay.

\paragraph{Memory overhead on the server.}
While FedEcho is designed to address the impact of high latency under data heterogeneity through server-side distillation, it implicitly introduces additional memory overhead compared to standard optimization-based asynchronous FL methods~\citep{nguyen2022federated, wang2024fadas}. Specifically, as shown in Line 5 of Algorithm~\ref{alg:afled}, the server needs to temporarily construct a client-specific model $\bx_t^i$ for inference. This requires the server storing the corresponding global model checkpoints $\bx_{t-\tau_t^i}$ used by client $i$ at the start of its local training. However, since multiple clients may share the same initialization, the server only needs to store at most $M_c$ global model checkpoints, where $M_c$ is the concurrency number (i.e., the maximum number of concurrently active clients). In addition, the server needs to store the logits copy for each client (no matter whether actively calculating the gradient or not), yet this memory cost is modest. As FedEcho only requires hundreds to a few thousand distillation samples, the total size of stored logits is typically on the same order of magnitude as a single model copy. 

\paragraph{Discussion on privacy-preserving concerns.}
Local data privacy is usually one of the most critical factors in federated learning. Unlike many distillation methods that rely on data from the training domain, the proposed FedEcho can achieve the desired performance by utilizing an unlabeled dataset $\cU$ without getting the data distribution from the local clients' training data. Moreover, as shown in our experiments, FedEcho also supports the use of synthetic data generated from diffusion models for distillation, helping to alleviate practical limitations of unlabeled datasets. Furthermore, the local model update and communication process in FedEcho closely follows the standard asynchronous FL framework and does not introduce additional privacy risks. It is also compatible with privacy-preserving techniques such as differential privacy.



\section{Theoretical Analysis}\label{subsec:conv}
In this section, we first study the convergence analysis under general non-convex settings for the proposed method. Moreover, we investigate the distillation gradient with one student and $N$ teacher models in Appendix~\ref{sec:understanding}.

In this section, we delve into the convergence analysis of our proposed FedEcho algorithm. We first introduce some common assumptions required for the analysis. 
\begin{assumption}[Smoothness]\label{as:smooth}
Each objective function on the $i$-th worker $F_i(\bx)$ is $L$-smooth, i.e., $\forall \bx,\by \in \RR^d$, 
\begin{align*}
    \|\nabla F_i(\bx) - \nabla F_i(\by) \|\leq L\|\bx-\by\|.
\end{align*}
\end{assumption}

\begin{assumption}[Bounded Variance]\label{as:bounded-v} 
    Each stochastic gradient is unbiased and has a bounded local variance, i.e., for all $\bx, i \in [N]$, we have
    $\EE \big[ \|\nabla F_i(\bx;\xi)- \nabla F_i(\bx)\|^2\big] \leq \sigma_l^2$, for the distillation loss $f_d$, the stochastic gradient is unbiased and has a bounded local variance as well, i.e., $\EE \big[ \|\nabla f_d(\bx|\bx^1, ..., \bx^n; \xi)- \nabla f_d(\bx|\bx^1, ..., \bx^n)\|^2\big] \leq \sigma_d^2$
and the loss function on each client has a global variance bound,
$\frac{1}{N}\sum_{i=1}^N \|\nabla F_i(\bx)-\nabla f(\bx)\|^2 \leq \sigma_g^2$.
\end{assumption}
Assumptions \ref{as:smooth} and \ref{as:bounded-v} are standard assumptions in federated non-convex optimization literature \citep{li2019communication,yang2021achieving,reddi2021adaptive,wang2022communication,wang2023lightweight}. The global variance upper bound of $\sigma_g^2$ in Assumption~\ref{as:bounded-v} measures the data heterogeneity across clients, and a global variance of $\sigma_g^2 = 0$ indicates a uniform data distribution across clients. 

\begin{assumption}[Bounded Delay of Gradient Computation] \label{as:bound-g-delay}
    Let $\tau_t^i$ represent the delay for global round $t$ and client $i$ which is applied in Algorithm \ref{alg:afled}. The delay $\tau_t^i$ is the difference between the current global round $t$ and the global round at which client $i$ started to compute the gradient. We assume that the maximum gradient delay (worst-case delay) is bounded, i.e., $\tau_{\max} = \max_{t \in [T], i \in [N]} \{\tau_t^i\} < \infty$. Moreover, we further define the average of the maximum delay over time $\tau_{\avg} = \frac{1}{T} \sum_{t=1}^T \tau_t^{\max} = \frac{1}{T} \sum_{t=1}^T \max_{i\in [N]} \{\tau_t^i\}$, which its bounded-ness is naturally hold if the maximum gradient delay holds.  
\end{assumption}
Assumption~\ref{as:bound-g-delay} is common in analyzing asynchronous and anarchic FL algorithms which incorporate the gradient delays into their algorithm design \citep{koloskova2022sharper, yang2021achieving, nguyen2022federated, toghani2022unbounded, wang2023tackling}. 

\begin{assumption}[Uniform Arrivals of Gradient Computation]\label{as:uniform_arrival}
    Let the set $\cM_t$ (with size $M$) include clients that transmit their local updates to the server in global round $t$. We assume that the clients' update arrivals are uniformly distributed, i.e., from a theoretical perspective, the $M$ clients in $\cM_t$ are randomly sampled without replacement from all clients $[N]$ according to a uniform distribution.
\end{assumption}
\vspace{-2pt}
Assumption~\ref{as:uniform_arrival} is also discussed in Anarchic FL~\citep{yang2022anarchic} and FADAS~\citep{wang2024fadas}. Note that this assumption is only used for the convenience of theoretical analysis. Our experimental settings does not rely on this assumption. 

\begin{theorem}[]\label{thm:conv}
Under Assumptions \ref{as:smooth}-\ref{as:uniform_arrival}, if the global learning rate satisfies $\eta = \Theta (\sqrt{M})$, the local learning rate satisfies $\eta_l = \Theta(\sqrt{\cF}/\sqrt{TK (\sigma_l^2 + K \sigma_g^2)}) $ and and distillation learning rate satisfies $\eta_d = \Theta (\cF/\sqrt{T^3 (\sigma_l^2 + K \sigma_g^2)} Q)$, where $\cF = f_1 - f_*$ and $f_* = \argmin_{\bx} f(\bx)$, then the global rounds of Algorithm \ref{alg:afled} satisfy
\begin{small}
\begin{align}\label{eq:thm-vr}
   \frac{1}{T} \sum_{t=1}^T \EE[\|\nabla f(\bx_t)\|^2] & = \cO \bigg(\frac{\sqrt{\cF} \sigma}{\sqrt{TKM}} + \frac{\sqrt{\cF}\sigma_g }{\sqrt{TM}} + \frac{\cF \tau_{\max} \tau_{\avg}  }{T} + \frac{\sqrt{\cF} \nu^2 }{T} \bigg),
\end{align}
\end{small}
\end{theorem}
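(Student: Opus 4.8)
The plan is to proceed via the standard descent-lemma route for non-convex smooth optimization, treating the per-round displacement as the sum of an asynchronous-aggregation part and a distillation-perturbation part. Starting from Assumption~\ref{as:smooth}, I would write
\begin{align*}
\EE[f(\bx_{t+1})] \le f(\bx_t) + \EE\langle \nabla f(\bx_t), \bx_{t+1}-\bx_t\rangle + \frac{L}{2}\EE\|\bx_{t+1}-\bx_t\|^2,
\end{align*}
and decompose $\bx_{t+1}-\bx_t = \eta\bDelta_t - \eta_d\sum_{q=1}^Q \text{Clip}(\nabla f_d^{(q)},\nu)$, where the first term is the buffered aggregation $\eta\bDelta_t = \frac{\eta}{M}\sum_{i\in\cM_t}\bDelta_t^i$ and the second is the accumulated clipped distillation step. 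The two displacement components are analyzed separately and recombined.

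For the aggregation part I would adapt the FedBuff-style argument to Assumption~\ref{as:uniform_arrival}: since $\cM_t$ is sampled uniformly without replacement, the conditional expectation of $\bDelta_t$ equals $-\eta_l\sum_{k=0}^{K-1}\frac{1}{N}\sum_{i=1}^N \nabla F_i(\bx_{t-\tau_t^i,k}^i)$, so the inner product $\EE\langle\nabla f(\bx_t),\eta\bDelta_t\rangle$ reduces to a $-\eta\eta_l K\|\nabla f(\bx_t)\|^2$ descent term plus three error contributions: (i) the \emph{staleness gap} between $\nabla f(\bx_{t-\tau_t^i})$ and $\nabla f(\bx_t)$, bounded by $L\|\bx_t-\bx_{t-\tau_t^i}\|$ via $L$-smoothness and unrolled over the delay window; (ii) the \emph{local client drift} $\|\bx_{t-\tau_t^i,k}^i-\bx_{t-\tau_t^i}\|$ accumulated over $K$ local SGD steps, controlled through Assumption~\ref{as:bounded-v}; and (iii) the \emph{heterogeneity gap} governed by $\sigma_g^2$. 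The quadratic term $\frac{L}{2}\eta^2\EE\|\bDelta_t\|^2$ supplies the variance contribution $\sigma_l^2$ scaled by $1/M$, coming from averaging $M$ buffered updates.

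For the distillation part, the crucial observation is that clipping guarantees $\|\text{Clip}(\nabla f_d^{(q)},\nu)\|\le\nu$ deterministically, so the full distillation displacement satisfies $\|\eta_d\sum_{q=1}^Q\text{Clip}(\nabla f_d^{(q)},\nu)\|\le\eta_d Q\nu$ irrespective of the (unknown, possibly misaligned) direction of the distillation gradient. I would therefore treat it purely as a bounded perturbation: the cross inner product $-\eta_d\EE\langle\nabla f(\bx_t),\sum_q\text{Clip}(\nabla f_d^{(q)},\nu)\rangle$ is handled by Young's inequality, charging a small multiple of $\|\nabla f(\bx_t)\|^2$ against the descent term and leaving an $(\eta_d Q\nu)^2$ residual, while the quadratic contribution yields $L\eta_d^2 Q^2\nu^2$. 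This is precisely where the $\nu^2$ term in Eq.~\eqref{eq:thm-vr} originates.

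Summing the per-round inequality over $t=1,\dots,T$, telescoping $f(\bx_1)-f_*=\cF$, dividing through by the effective step $\eta\eta_l K$, and substituting the prescribed rates $\eta=\Theta(\sqrt M)$, $\eta_l=\Theta(\sqrt{\cF}/\sqrt{TK(\sigma_l^2+K\sigma_g^2)})$, and $\eta_d=\Theta(\cF/(\sqrt{T^3(\sigma_l^2+K\sigma_g^2)}\,Q))$ produces the four advertised terms: the variance terms $\sqrt{\cF}\sigma/\sqrt{TKM}$ and $\sqrt{\cF}\sigma_g/\sqrt{TM}$, the delay term $\cF\tau_{\max}\tau_{\avg}/T$, and the distillation residual $\sqrt{\cF}\nu^2/T$. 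I expect the main obstacle to be controlling the staleness gap (i): unrolling $\|\bx_t-\bx_{t-\tau_t^i}\|$ couples the aggregation and distillation updates across the entire delay window, so the intermediate $\eta_d Q\nu$ displacements also enter the delay bound, and one must carefully separate the worst-case factor $\tau_{\max}$ from the time-averaged factor $\tau_{\avg}$ (using Assumption~\ref{as:bound-g-delay}) so that their \emph{product} appears at the $1/T$ rate rather than a cruder $\tau_{\max}^2$. Keeping the clipped distillation perturbation from contaminating this delay term, and verifying that the chosen $\eta_d$ renders every cross term subordinate to the leading $1/\sqrt{TKM}$ rate, is the delicate part of the argument.
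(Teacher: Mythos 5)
Your proposal matches the paper's proof in all essentials: a smoothness/descent-lemma decomposition of each round into the buffered-aggregation step and the clipped distillation step, a FedBuff-style treatment of the aggregation inner product under the uniform-arrival assumption (with client-drift, heterogeneity, and staleness errors), delay unrolling in which the distillation displacements also enter and which is converted into the $\tau_{\max}\tau_{\avg}/T$ term, and treatment of the clipped distillation update as a bounded perturbation of norm at most $\eta_d Q \nu$, followed by telescoping and the prescribed learning-rate substitution. The only immaterial differences are structural: the paper splits the round through the intermediate iterate $\hat\bx_{t+1}$ and applies smoothness twice, handling the distillation inner product via the polarization identity (which produces the linear-in-$\eta_d$ residual $\eta_d Q\nu^2$ that actually dominates the $\sqrt{\cF}\nu^2/T$ term), whereas your single application at $\bx_t$ plus Young's inequality yields only quadratic-in-$\eta_d$ residuals, which are smaller and thus still consistent with the stated bound.
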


\begin{remark}
Theorem \ref{thm:conv} indicates that, given a sufficiently large $T$, the proposed FedEcho algorithm achieves a convergence rate of $\cO\big(\frac{1}{\sqrt{TM}}\big)$ with respect to both $T$ and $M$.
\end{remark}
Compared to existing asynchronous FL methods in non-convex scenarios, the proposed FedEcho achieves a comparable convergence rate and delay dependency, consistent with the analyses in \citep{wang2024tackling} and \citep{wang2024fadas}. Moreover, relative to the original analysis of FedBuff~\citep{nguyen2022federated, toghani2022unbounded}, FedEcho demonstrates a slightly improved dependency on $\tau_{\max}$ in the convergence rate. It is worth noting that, the bounded stochastic variance $\sigma_d$ for FedEcho contributes solely to a negligible residual term that is asymptotically dominated by $\cO\left(\frac{1}{T}\right)$. Moreover, while CA$^2$FL~\citep{wang2024tackling} attempts to mitigate the adverse effects of stale updates by incorporating cached model updates $\bh_t^i$ into global aggregation, its performance highly relies on the practical conditions that these cached updates remain sufficiently informative over time. This limitation is reflected in the theoretical analysis as well, as the convergence rate relies an additional assumption of maximum staleness $\rho_{\max}$, and the rate includes the term $\frac{\zeta_{\max} \sigma_l^2}{T}$. As such, when a client has not participated in the training process for many rounds, a large $\zeta_{\max}$ can significantly degrade the overall convergence rate.

\section{Experimental results}

We evaluate the performance of our proposed algorithm through experiments on vision and text classification tasks, as well as a natural language generation task. For vision classification, we use CIFAR-10 and CIFAR-100 \citep{krizhevsky2009learning} datasets with ResNet-18 model \citep{he2016deep}. For text classification, we adopt the MRPC subtask from the GLUE benchmark~\citep{wang2018glue} with the BERT-base model~\citep{devlin2018bert}. To extend our study to natural language generation, we fine-tune the Qwen-2.5 1.5B Instruct model~\citep{qwen2.5} on the MathInstruct dataset~\citep{yue2023mammoth}. We compare our proposed FedEcho against several representative FL baselines, such as FedBuff (without differential privacy) \citep{nguyen2022federated}, CA$^2$FL~\citep{wang2024tackling}, FedAC~\citep{zang2024efficient}, and FADAS~\citep{wang2024fadas}. We exclude FedAsync~\citep{xie2019asynchronous} from comparison, as its convergence is known to be unstable under significant data heterogeneity and communication delays. The key implementation details are summarized below, and all experiments are conducted on a single NVIDIA A6000 GPU, with additional results and settings provided in Appendix~\ref{sec:appen_exp}. 

\begin{wrapfigure}{r}{0.5\textwidth}
\vspace{-20pt}
\begin{minipage}{\linewidth}
\begin{table}[H]
    \centering
    \caption{Details for wall-clock delay simulation (in units of 10 seconds). }
    \vskip 0.05in
    {\small
    \begin{tabular}{llll}
        \toprule
        Delay/Runtime & Short & Medium & Long \\
        \midrule
        \textit{Large delay} & $U(1,2)$ & $U(3, 5)$ & $U(50, 80)$ \\
        \textit{Mild delay} & $U(1,2)$ & $U(3, 5)$ & $U(10, 20)$ \\
        \bottomrule
    \end{tabular}
    }
    \label{tab:delay}
\end{table}
\end{minipage}
\end{wrapfigure}

\textbf{Overview of running time and delay simulation. }
We simulate client runtimes as follows. At the beginning of training, each client is assigned to a runtime category based on a parameter $\gamma$ and the number of local training samples. This design reflects the practical observation that clients with larger datasets sometimes require more training time. In general, a smaller $\gamma$ indicates a stronger dependence of runtime on the number of local samples. To simulate wall-clock runtime, we uniformly sample the runtime of each client from category-specific distributions, as detailed in Table~\ref{tab:delay}. In practice, we ensure that no more than 10\% of the clients are assigned to the large-delay group. 

\subsection{Experiments for Vision Classification}
\textbf{Overview of server-distillation. } For CIFAR-10 experiments, we use CIFAR-100~\citep{krizhevsky2009learning}, STL10~\citep{coates2011analysis}, and synthetic data generated by a denoising diffusion probabilistic model (DDPM)\footnote{\url{https://huggingface.co/google/ddpm-cifar10-32}} as unlabeled distillation datasets. For CIFAR-100 experiments, we use CIFAR-10~\citep{krizhevsky2009learning}, STL10~\citep{coates2011analysis}, and the same synthetic dataset. The default number of distillation samples is set to 2000, the clipping threshold $\nu$ is fixed at 5, and the minimum and maximum coefficients are set to $\alpha_{\min}=0.2$ and $\alpha_{\max}=0.8$, respectively. We use the Adam optimizer for server-side distillation with learning rate $\eta_d=3\times 10^{-6}$, $\beta_1=0.9$, $\beta_2=0.999$, and $\epsilon=10^{-8}$.

\textbf{Overview of data partition and local training details. }
We consider the total number of clients of 50, the concurrency $M_c$ is set to 25, and the buffer size $M=5$. The client data is partitioned using a Dirichlet distribution~\citep{Wang2020Federated,wang2020tackling}, where the concentration parameter $\alpha_{\text{Dir}}$ controls the degree of heterogeneity. For CIFAR-10, we consider two levels of heterogeneity with $\alpha_{\text{Dir}}=0.1$ and $\alpha_{\text{Dir}}=0.3$. For CIFAR-100, we adopt $\alpha_{\text{Dir}}=0.03$ and $\alpha_{\text{Dir}}=0.1$ due to its larger number of classes. Each client performs two local training epochs per communication round with a mini-batch size of 50. All baseline methods use SGD with weight decay $10^{-4}$ as the local optimizer, and both global and local learning rates are tuned for each method via grid search. 

\begin{table}[t]
    \centering
    \caption{The test accuracy on the training ResNet-18 model on the CIFAR-10 dataset with two data heterogeneity levels in the mild and large delay scenarios for 500 communication rounds. We report the average accuracy and standard deviation for three different seeds. }
    \begin{tabular}{l|cc|cc}
    \toprule 
    & \multicolumn{2}{c|}{Large delay} & \multicolumn{2}{c}{Mild delay}\\
    Method & Dir (0.1) & Dir (0.3) & Dir (0.1) & Dir (0.3)\\
    \midrule
    FedBuff & 52.85$\pm$7.76 & 51.44$\pm$8.74 & 42.57$\pm$5.00 & 41.60$\pm$5.10\\
    CA$^2$FL & 55.33$\pm$7.87 & 50.37$\pm$7.27 & 42.99$\pm$7.93 & 57.31$\pm$7.03 \\
    FedAC & 62.06$\pm$3.16 & 74.67$\pm$1.95 & 69.54$\pm$1.43 & 76.23$\pm$1.14 \\
    FADAS & 61.25$\pm$1.16 & 75.43$\pm$2.77 & 65.51$\pm$2.04 & 74.85$\pm$1.30 \\
    \midrule
    FedEcho (CIFAR-100) & 75.39$\pm$2.93 & 84.11$\pm$1.49 & \textbf{79.69}$\pm$0.68 & 83.74$\pm$0.96 \\
    FedEcho (STL10) & \textbf{77.81}$\pm$0.48 & \textbf{84.96}$\pm$0.86 & 77.99$\pm$0.94 & \textbf{83.98}$\pm$0.67 \\
    FedEcho (synthetic w. diffusion) & 74.93$\pm$1.75 & 80.68$\pm$1.35 & 72.95$\pm$5.20 & 81.27$\pm$1.71\\
    \bottomrule
    \end{tabular}
    \label{tab:cifar10}
\end{table}

\begin{table}[t]
    \centering
    \caption{The test accuracy on the training ResNet-18 model on the CIFAR-100 dataset with two data heterogeneity levels in the mild and large delay scenarios for 1000 communication rounds.}
    \begin{tabular}{l|cc|cc}
    \toprule 
    & \multicolumn{2}{c|}{Large delay} & \multicolumn{2}{c}{Mild delay}\\
    Method & Dir (0.03) & Dir (0.1) & Dir (0.03) & Dir (0.1)\\
    \midrule
    FedBuff & 30.58$\pm$1.13 & 44.28$\pm$0.49 & 32.24$\pm$1.70 & 45.41$\pm$0.58 \\
    CA$^2$FL & 37.32$\pm$0.49& 43.88$\pm$0.24 & 35.02$\pm$2.25 & 45.03$\pm$0.16 \\
    FedAC & 50.09$\pm$0.64 & 58.07$\pm$0.52 & 51.09$\pm$0.47 &58.61$\pm$0.29 \\
    FADAS &  48.22$\pm$0.70 & 57.95$\pm$0.40 & 49.29$\pm$0.37 & 58.22$\pm$0.32 \\
    \midrule
    FedEcho (CIFAR-10) & 53.74$\pm$0.29 & 61.54$\pm$0.14 & \textbf{55.55}$\pm$0.26 & \textbf{63.05}$\pm$0.21\\
    FedEcho (STL10) & \textbf{53.94}$\pm$0.18 & \textbf{62.30}$\pm$0.09 & 55.26$\pm$0.21 & 59.68$\pm$0.10\\
    FedEcho (synthetic w. diffusion) & 48.04$\pm$0.55 & 57.58$\pm$0.13 & 49.23$\pm$0.45 & 62.04$\pm$0.20 \\
    \bottomrule
    \end{tabular}
    \label{tab:cifar100}
\end{table}

\begin{wrapfigure}{r}{0.35\textwidth}
\begin{minipage}{\linewidth}
\vspace{-35pt}
\begin{table}[H]
    \centering
    \caption{The test accuracy on the training BERT-base model on the GLUE-MRPC dataset under large delay scenarios for 200 communication rounds. }
    \vskip 0.05in
    {\small
    \begin{tabular}{l|c}
    \toprule 
        Method & Accuracy\\
    \midrule
        FedBuff & 71.47$\pm$0.12 \\
        CA$^2$FL & 72.14$\pm$3.19  \\
        FedAC & 73.19$\pm$0.85 \\
        FADAS & 68.38$\pm$0.00 \\
    \midrule
        FedEcho & \textbf{77.30}$\pm$1.31 \\
    \bottomrule
    \end{tabular}
    }
    \label{tab:glue}
\end{table}
\vspace{-15pt}

\end{minipage}
\end{wrapfigure}
\textbf{Main results.}
Table~\ref{tab:cifar10} and Table~\ref{tab:cifar100} report the test accuracy of ResNet-18 on CIFAR-10 and CIFAR-100 under varying heterogeneity levels and delay scenarios. FedEcho consistently achieves substantially higher accuracy across all settings, regardless of the chosen server-side distillation dataset. Both tables also show that existing asynchronous FL baselines, particularly FedBuff and CA$^2$FL, suffer from pronounced accuracy degradation under asynchronous delays. In contrast, FedEcho demonstrates greater stability, yielding more consistent performance across delay and heterogeneity conditions. On CIFAR-100, FedEcho outperforms the best-performing baseline by at least 3 percentage points. We further observe that external unlabeled datasets such as STL10, as well as cross-dataset CIFAR-100/10, provide more stable and effective distillation targets than synthetic data generated by diffusion models.

\textbf{Server computation costs and memory overhead. }

To evaluate the additional computation cost introduced by this design, we analysis the real runtime during server aggregation steps. With the default distillation setting of 2,000 samples, since each distillation step involves the same procedure of back-propagation steps as regular training, the average server-side distillation process in our image classification experiments takes approximately 2.2–3.5 seconds per global round, which is reasonably acceptable. 

Moreover, as discussed in previous sections, the proposed FedEcho requires the server to maintain both the global model checkpoints used for each client’s local training and the corresponding client logits. To quantify this storage cost, we track the number of global model checkpoints required. Our experiments show that the server needs to store at most 8 different checkpoints (i.e., global models from previous rounds), which is acceptable for a central server.

\subsection{Experiments on Language Tasks}

Due to space constraints, we provide the experimental details in Appendix~\ref{sec:appen_exp}. As shown in Table~\ref{tab:glue}, FedEcho achieves the highest validation accuracy, demonstrating the effectiveness of uncertainty-aware distillation in exploiting straggler contributions for the text classification task. Furthermore, when applied to generative language models, FedEcho attains the best GSM8K score performance on the Qwen2.5-1.5B Instruct model fine-tuned with MathInstruct (Table~\ref{tab:mathinstruct}), improving over the best baseline FedAC and significantly surpassing the original Qwen2.5-1.5B Instruct model. These results confirm that our uncertainty-aware distillation mechanism benefits both language classification and generative reasoning tasks, highlighting its robustness in leveraging straggler contributions.

\subsection{Ablation Studies}
In the following, we analyze several aspects of the proposed FedEcho, including how to choose the number of distillation samples and the impact of mixing weight $\alpha$ in the uncertainty-aware distillation loss. Additionally, we discuss the effect of clipping threshold $\nu$ in Appendix~\ref{sec:appen_exp}.

\begin{wrapfigure}{r}{0.4\textwidth}
\begin{minipage}{\linewidth}
    \vspace{-30pt}

\begin{table}[H]
    \centering
    \caption{GSM8K accuracy of the Qwen2.5-1.5B Instruct model fine-tuned on the MathInstruct dataset for 5 communication rounds under large-delay scenarios.}
    {\small
    \begin{tabular}{l|c}
    \toprule 
        Method & GSM8K acc. \\
    \midrule
    Qwen 2.5-1.5B Instruct & 42.38 \\
    \midrule
        FedBuff & 52.08 \\
        CA$^2$FL & 51.70 \\
        FedAC & 52.16 \\
        FADAS & 46.40 \\
    \midrule
        FedEcho & \textbf{52.62} \\
    \bottomrule
    \end{tabular}
    }
    \label{tab:mathinstruct}
\end{table}
    \end{minipage}
\end{wrapfigure}

\textbf{Number of distillation samples. }
We investigate the impact of the number of distillation samples on the server-distillation procedures. Specifically, we consider an experimental setting where we train a ResNet-18 model with a local CIFAR-10 dataset, and use CIFAR-100 as a server distillation set. We compare 500, 1000, 2000, and 5000 distillation samples. As shown in Table~\ref{tab:num_sample}, increasing the number of distillation samples consistently leads to higher final accuracy.

\paragraph{Mixing weight $\alpha$ in the uncertainty-aware distillation loss}
We investigate the impact of weight $\alpha$ on the uncertainty-aware distillation loss and its effect on overall performance. Specifically, we compare the dynamic $\alpha$ with $\alpha_{\min} = 0.2, \alpha_{\max}=0.8$ with $\alpha = 0$ (CE loss) and $\alpha=1$ (KL divergence loss) and the mixed loss with fixed $\alpha=0.5$. Table~\ref{tab:kl_ce} shows that using only CE loss leads to the lowest accuracy, while applying solely KL divergence or a fixed mixture number improves the performance. Notably, the dynamic $\alpha$ achieves the best overall result, which indicates that balancing CE and KL losses based on the entropy-based uncertainty provides effective guidance for the global model under asynchronous settings.

\begin{table}[H]
    \centering
    \caption{Ablations of the number of distillation data samples. }
    \vskip 0.05in
    {\small
    \begin{tabular}{l|c}
    \toprule 
    \# samples & Acc. \\
    \midrule
        500 & 61.86$\pm$6.05\\
        1000 & 67.91$\pm$6.22 \\
        2000 & 75.39$\pm$2.93 \\
        5000 & \textbf{77.03}$\pm$0.43 \\
    \bottomrule
    \end{tabular}
    }
    \label{tab:num_sample}
\end{table}

\begin{table}[H]
    \centering
    \caption{Ablations of $\alpha$. }
    \vskip 0.05in
    {\small
    \begin{tabular}{l|c}
    \toprule 
     & Acc. \\
    \midrule
        $\alpha=0$ & 71.01$\pm$2.79 \\
        $\alpha=1$ & 74.86$\pm$3.14\\
        $\alpha =0.5$ & 74.92$\pm$2.54 \\
        dynamic $\alpha$ & \textbf{75.39}$\pm$2.93 \\
    \bottomrule
    \end{tabular}
    }
    \label{tab:kl_ce}
\end{table}


\section{Conclusions}
In this work, we addressed the critical challenges of asynchronous FL, where outdated updates from straggler clients and the dominance of faster clients under heterogeneous data distributions often degrade model performance. We proposed FedEcho, a novel uncertainty-aware distillation framework to retain the useful information from the prediction level rather than directly merging stale parameter updates. FedEcho mitigates the dual issues of asynchronous delay and data heterogeneity while preserving the diverse knowledge from all clients. We provide theoretical convergence guarantee, and extensive experiments demonstrate that FedEcho consistently outperforms existing asynchronous FL baselines across both classification and generation tasks, achieving robust and scalable performance without accessing private client data.

\bibliography{8_ref}
\bibliographystyle{plain}

\newpage
\appendix
\onecolumn

\section{Additional Experiments} \label{sec:appen_exp}

\subsection{Experimental Settings}

\subsubsection{Image Classification}
We report the local and global learning rates for image classification tasks as follows.
\begin{table}[h]
    \centering
    \caption{Learning rates details for CIFAR-10 experiment. The learning rates is reported by $\eta_l/ \eta_g$ (local/global). }
    \begin{tabular}{l|cc|cc}
    \toprule 
    & \multicolumn{2}{c|}{Large delay} & \multicolumn{2}{c}{Mild delay}\\
    Method & Dir (0.1) & Dir (0.3) & Dir (0.1) & Dir (0.3)\\
    \midrule
    FedBuff & 0.01/1.0 & 0.01/1.0 & 0.01/1.0 & 0.01/1.0 \\
    CA$^2$FL & 0.01/1.0 & 0.01/1.0 & 0.01/1.0 & 0.01/1.0  \\
    FedAC & 0.1/0.0001 & 0.1/0.0001 & 0.1/0.0001 & 0.1/0.0001  \\
    FADAS & 0.1/0.0001 & 0.1/0.0001 & 0.1/0.0001 & 0.1/0.0001  \\
    \midrule
    FedEcho (CIFAR-100) & 0.03/1.0  & 0.03/1.0  & 0.03/1.0  & 0.03/1.0\\
    FedEcho (STL10) & 0.03/1.0 & 0.03/1.0  & 0.03/1.0  & 0.03/1.0\\
    FedEcho (synthetic w. diffusion)  & 0.03/1.0  & 0.03/1.0  & 0.03/1.0  & 0.03/1.0\\
    \bottomrule
    \end{tabular}
    \label{tab:lr}
\end{table}

\begin{table}[h]
    \centering
    \caption{Learning rates details for CIFAR-100 experiment. The learning rates is reported by $\eta_l/ \eta_g$ (local/global). }
    \begin{tabular}{l|cc|cc}
    \toprule 
    & \multicolumn{2}{c|}{Large delay} & \multicolumn{2}{c}{Mild delay}\\
    Method & Dir (0.1) & Dir (0.3) & Dir (0.1) & Dir (0.3)\\
    \midrule
    FedBuff & 0.003/1.0 & 0.003/1.0 & 0.003/1.0 & 0.003/1.0 \\
    CA$^2$FL & 0.003/1.0 & 0.003/1.0 & 0.003/1.0 & 0.003/1.0  \\
    FedAC & 0.1/0.0001 & 0.1/0.0001 & 0.1/0.0001 & 0.1/0.0001  \\
    FADAS & 0.1/0.0001 & 0.1/0.0001 & 0.1/0.0001 & 0.1/0.0001  \\
    \midrule
    FedEcho (CIFAR-10) & 0.03/1.0  & 0.03/1.0  & 0.03/1.0  & 0.03/1.0\\
    FedEcho (STL10) & 0.03/1.0 & 0.03/1.0  & 0.03/1.0  & 0.03/1.0\\
    FedEcho (synthetic w. diffusion)  & 0.03/1.0  & 0.03/1.0  & 0.03/1.0  & 0.03/1.0\\
    \bottomrule
    \end{tabular}
    \label{tab:lr2}
\end{table}

\subsubsection{Text classification and language generation }

\textbf{Overview of experimental setups. } For the text classification tasks on MRPC, we use the validation dataset as evaluation set, and the unlabeled test as the distillation set. For the natural language generation, we randomly select 100 mathematical queries and exclude them from the training set. The default number of distillation samples is set to 100. Same as the previous image classification task, we set $\nu=5$, $\alpha_{\min}=0.2$ and $\alpha_{\max}=0.8$. We use the Adam optimizer for server-side distillation with learning rate $\eta_d=3\times 10^{-7}$ for classification experiment and $\eta_d=3\times 10^{-6}$ for generation, $\beta_1=0.9$, $\beta_2=0.999$, and $\epsilon=10^{-8}$. We consider the total number of clients of 10, the concurrency $M_c$ is set to 5, and the buffer size $M=3$. The client data is partitioned using a Dirichlet distribution with $\alpha_{\text{Dir}}=0.6$ Each client performs one local training epochs per communication round with a mini-batch size of 8. We adopt AdamW as the local optimizer, and the local and global learning rate list are summarized in Table~\ref{tab:lr3}.

\begin{table}[h]
    \centering
    \caption{Learning rates details for CIFAR-100 experiment. The learning rates is reported by $\eta_l/ \eta_g$ (local/global). }
    \begin{tabular}{l|cc}
    \toprule 
    Method & MRPC & MathInstruct \\
    \midrule
    FedBuff & 3e-5/1.0& 3e-5/1.0 \\
    CA$^2$FL & 3e-5/1.0& 3e-5/1.0 \\
    FedAC & 3e-6/0.1 & 3e-6/0.1 \\
    FADAS & 3e-6/0.1 & 3e-6/0.1 \\
    \midrule
    FedEcho & 3e-5/1.0& 3e-5/1.0 \\
    \bottomrule
    \end{tabular}
    \label{tab:lr3}
\end{table}

\subsection{Additional Discussions}
As shown in Eq.~\eqref{eq:distill_loss}, we incorporate gradient clipping into the distillation loss. This is particularly important in the early stages of training, when teacher logits may be inaccurate and fail to fully capture local knowledge. To mitigate such imperfect guidance, we constrain the magnitude of the distillation gradient using a clipping threshold. In this section, we conduct an ablation study to examine the impact of different clipping thresholds on the overall performance. As shown in Table~\ref{tab:clip}, setting $\nu=5$ yields the best accuracy, while smaller ($\nu=1$) or no clipping ($\nu=\infty$) slightly reduces performance, highlighting the importance of a moderate threshold.

\begin{table}[H]
    \centering
    \caption{Ablations of clipping threshold $\nu$. }
    \vskip 0.05in
    {\small
    \begin{tabular}{l|c}
    \toprule 
     & Acc. \\
    \midrule
        $\nu=1$ & 74.38$\pm$2.67\\
        $\nu=5$ & 75.39$\pm$2.93 \\
        $\nu = \infty$ & 74.14$\pm$2.07\\
    \bottomrule
    \end{tabular}
    }
    \label{tab:clip}
\end{table}

\section{Discussion of Distillation Gradient}\label{sec:understanding}
\textbf{Classification with a single hidden layer. } Given the original loss function with input data $a_n$ and ground truth $b_n$ on the server, the server aims to optimize $\min f(\bx) = \min \ell (\phi_{\bx} (a_n), b_n)$. In this section, we investigate the distillation process defined as $\min \ell (\phi_{\bx} (a_n), \frac{1}{N} \sum_{i=1}^N (\phi_{\bx_i} (a_n)) := \min f_d(\bx|\bx_1,..., \bx_N)$ which is applied at the end of each global round. For notational simplicity, we omit the global round index $t$ and refer to $\bx$ as the current global model and $\bx_i$ are local models.

Consider a $K$-classification model with one hidden layer, i.e., $\phi_{\bx}  (a_n) = \text{Softmax}(\bx_T a_n) \in \RR^K$, where $\bx \in \RR^{d\times K}$ are the student model parameters, suppose there is an input data $a_n \in \cA = \RR^d$. 
Then we have the following related to distillation loss 

\begin{align}
    f(\bx|\bx_1,..., \bx_n) & = \ell (\phi_{\bx} (a_n), \frac{1}{N} \sum_{i=1}^N (\phi_{\bx_i} (a_n)) \notag\\
    & = \ell (\sigma(\bx_T a_n), s_n) \notag\\
    & = - \sum_{k=1}^K s_{n,k} \log \sigma(\bx_T a_n)_k \notag\\
    & = - \sum_{k=1}^K s_{n,k} \log  \frac{e^{(\bx_1)^T a_n}}{\sum_{j=1}^K e^{(\bx_j)^T a_n}} \notag\\
    & = \sum_{k=1}^K s_{n,k}  (\log \sum_{j=1}^K e^{(\bx_j)^T a_n} - \log e^{(\bx_k)^T a_n}  ) \notag\\
    & = \sum_{k=1}^K s_{n,k} \log \sum_{j=1}^K e^{(\bx_j)^T a_n} - \sum_{k=1}^K s_{n,k} \log e^{(\bx_k)^T a_n} \notag\\
    & = \log \sum_{k=1}^K e^{(\bx_k)^T a_n} - \sum_{k=1}^K s_{n,k} \log e^{(\bx_k)^T a_n} \notag\\
    & = \log \sum_{k=1}^K e^{(\bx_k)^T a_n} - \sum_{k=1}^K s_{n,k} (\bx_k)^T a_n,
\end{align}

where the second to last one is due to the sum of soft labels are equal to 1. 
Therefore, 
\begin{align}
    \nabla_{\bx_k} f(\bx|\bx_1,..., \bx_n) & = \nabla_{\bx_k}   [ \log \sum_{k=1}^K e^{(\bx_k)^T a_n} - \sum_{k=1}^K s_{n,k} (\bx_k)^T a_n  ] \notag\\
    & = \frac{e^{(\bx_k)^T a_n}}{\sum_{i=1}^K e^{(\bx_i)^T a_n}} a_n - s_{n,k} a_n \notag\\
    & = (\sigma(\bx_T a_n) - s_n)_k a_n \notag\\
    & = (\sigma(\bx_T a_n) - b_n)_k a_n - (\frac{1}{N}\sum_{i=1}^N \sigma (\bx_i^T a_n) - b_n)_k a_n \notag\\
    & = \nabla_{\bx_k} f(\bx) - \frac{1}{N} \sum_{i=1}^N \nabla_{\bx_i^k} f(\bx_i).
\end{align}

Therefore, we can conclude the distillation gradient as 
\begin{align}
    \nabla_{\bx} f(\bx|\bx_1,..., \bx_n) = \nabla_{\bx} f(\bx) - \frac{1}{N} \sum_{i=1}^N \nabla_{\bx_i} f(\bx_i).
\end{align}

\textbf{Generic classification.}
Consider an arbitrary neural network architecture for classification that ends with a softmax layer. Let $\psi(\bx)$ denote the logits produced by the model with parameters $\bx$. Define the loss function associated with logits $z$ and true label $b_n$ as $\phi_{\bx}(z) = \ell(\text{Softmax}(z), b_n)$, where $\ell$ is a loss function. In this setting, $\psi(\bx)$ computes the logits from the input data, and $\phi$ evaluates the loss given the logits. We define the overall loss function as $f(\bx) = \phi(\psi(\bx))$. Then, the following relationship holds:
\begin{align}
    a_n \rightarrow \psi_n(\bx) \rightarrow \phi_{\bx} (a_n) := \text{Softmax}(\psi_n (\bx)) \in \RR^K.
\end{align}

Then we have the following for the distillation loss
\begin{align}
    f(\bx|\bx_1,..., \bx_n) & = \ell (\phi_{\bx} (a_n), \frac{1}{N} \sum_{i=1}^N (\phi_{\bx_i} (a_n)) \notag\\
    & = \ell  (\sigma(\psi_n (\bx)), \frac{1}{N} \sum_{i=1}^N \sigma(\psi_n (\bx_i))  ) \notag\\ 
    & = \ell (\sigma(\psi_n (\bx)), s_n) \notag\\
    & = - \sum_{k=1}^K s_{n,k} \log \sigma(\psi_n (\bx))_k \notag\\
    & = - \sum_{k=1}^K s_{n,k} \log  \frac{e^{\psi_{n,k}} (\bx)}{\sum_{j=1}^K e^{\psi_{n,j}} (\bx)} \notag\\
    & = \sum_{k=1}^K s_{n,k}  (\log \sum_{j=1}^K e^{\psi_{n,j}(\bx)} - \psi_{n,k}(\bx)  ) \notag\\
    & = \sum_{k=1}^K s_{n,k} \log \sum_{j=1}^K e^{\psi_{n,j}(\bx)} - \sum_{k=1}^K s_{n,k} \psi_{n,k}(\bx) \notag\\
    & = \log \sum_{k=1}^K e^{\psi_{n,k}(\bx)} - \sum_{k=1}^K s_{n,k} \psi_{n,k}(\bx). 
\end{align}
Applying the gradient operator, we have 
\begin{align}
    \nabla_{\bx} f(\bx|\bx_1,..., \bx_n) & = \nabla_{\bx} [ \log \sum_{k=1}^K e^{\psi_{n,k}(\bx)} - \sum_{k=1}^K s_{n,k} \psi_{n,k}(\bx)  ] \notag\\
    & = \sum_{k=1}^K \frac{e^{\psi_{n,k}} (\bx)}{\sum_{j=1}^K e^{\psi_{n,j}} (\bx)} \nabla_{\bx} \psi_{n,k} (\bx) - \sum_{k=1}^K s_{n,k} \nabla_{\bx} \psi_{n,k} (\bx) \notag\\
    & = \sum_{k=1}^K (\sigma(\psi_n (\bx)) - s_n)_k \nabla_{\bx} \psi_{n,k} (\bx) \notag\\
    & = \frac{\partial \psi_n(\bx)}{\partial \bx}  (\sigma(\psi_n (\bx)) - s_n). 
\end{align}
Therefore, we have 
\begin{align}
    \sigma(\psi_n(\bx)) - s_n & = (\sigma(\psi_n(\bx)) - b_n) - (\frac{1}{N} \sum_{i=1}^N \sigma(\psi_n (\bx_i)) - b_n) \notag\\
    & = \nabla_{\psi} \ell (\sigma(\psi_n (\bx)), b_n) - \frac{1}{N} \sum_{i=1}^N \nabla_{\psi} \ell (\sigma(\psi_n (\bx_i)), b_n) \notag\\
    & = \nabla \varphi_n (\psi_n(\bx)) - \frac{1}{N} \sum_{i=1}^N \nabla \varphi_n (\psi_n(\bx_i)),
\end{align}
further if taking into account that  $f_n(\bx) = \varphi_n (\psi_n(\bx))$, we show the following for the distillation gradient,
\begin{align}
    \nabla_{\bx} f(\bx|\bx_1,..., \bx_n) & = \frac{\partial \psi_n(\bx)}{\partial \bx}  (\nabla \varphi_n (\psi_n(\bx)) - \frac{1}{N} \sum_{i=1}^N \nabla \varphi_n (\psi_n(\bx_i))  ) \notag\\
    & = \frac{\partial \psi_n(\bx)}{\partial \bx} \frac{\partial f_n(\bx)}{\partial \psi_n(\bx)} - \frac{1}{N} \sum_{i=1}^N \frac{\partial \psi_n(\bx)}{\partial \bx} \frac{\partial f_n(\bx_i)}{\partial \psi_n(\bx_i)}.
\end{align}
The first term is the student' gradient, while the second term differs from the average of the teachers' gradient $\frac{1}{N} \sum_{i=1}^N \nabla_{\bx_i} f(\bx_i)$ as the partial derivatives of logits are with respect to the student model. 

\section{Convergence analysis}\label{sec:async-thm}
\begin{proof}[Proof of Theorem \ref{thm:conv}]
Following several previous works studied asynchronous federated methods \citep{chen2018convergence,wang2022communication}, we have 

From Assumption \ref{as:smooth}, $f$ is $L$-smooth, taking the total expectation over all previous round, $0,1,...,t-1$  on the auxiliary sequence $\bx_t$,
\begin{align}\label{eq:f-Lsmooth-noncomp}
& \EE[f(\bx_{t+1})-f(\bx_{t})] \notag\\
& = \EE[f(\bx_{t+1}) - f(\hat\bx_{t+1}) + f(\hat\bx_{t+1}) -f(\bx_{t})]\notag\\
& = \underbrace{\EE[f(\bx_{t+1}) - f(\hat\bx_{t+1})]}_{E_{\text{distill}}} + \underbrace{\EE[f(\hat\bx_{t+1})-f(\bx_{t})]}_{E_{\text{original}}},
\end{align}
where by Assumption~\ref{as:smooth}, there is
\begin{align}
\EE[f(\bx_{t+1})-f(\hat \bx_{t+1})]
& \leq \EE[\langle \nabla f(\hat \bx_{t+1}), \bx_{t+1}-\hat\bx_{t+1} \rangle] + \frac{L}{2} \EE[\|\bx_{t+1}-\hat\bx_{t+1}\|^2] \notag\\
& = \underbrace{\EE[\langle \nabla f(\hat \bx_{t+1}), \eta_d \hat\bDelta_t \rangle ]}_{I_1} + \underbrace{\frac{L}{2} \EE [\|\hat\bDelta_t\|^2]}_{I_2},
\end{align}
and 
\begin{align}
\EE[f(\hat\bx_{t+1})-f(\bx_{t})]
& \leq \EE[\langle \nabla f(\bx_t), \hat\bx_{t+1}-\bx_t \rangle] + \frac{L}{2} \EE[\|\hat\bx_{t+1}-\bx_t\|^2] \notag\\
& = \underbrace{\EE[\langle \nabla f(\bx_{t}), \eta \bDelta_t \rangle ]}_{I_3} + \underbrace{\frac{\eta^2 L}{2} \EE [\|\bDelta_t\|^2]}_{I_4} .
\end{align}

\textbf{Bounding $I_1$}
For $I_1$, there is 
\begin{align}\label{eq:I3-1}
    I_1 & = \eta_d \EE [ \langle \nabla f(\hat \bx_{t+1}), \hat\bDelta_t  \rangle ] \notag\\
    & = \eta_d \EE [ \langle \nabla f(\hat \bx_{t+1}) , \hat\bDelta_t + Q\nabla f(\hat \bx_{t+1}) - Q\nabla f(\hat \bx_{t+1})  \rangle ] \notag\\
    & = -\eta_d Q\EE  [ \|\nabla f(\hat \bx_{t+1})  \|^2 ] + \eta_d \EE [ \langle \nabla f(\hat \bx_{t+1}) , \hat\bDelta_t + Q\nabla f(\hat \bx_{t+1})  \rangle ] \notag\\
    & = -\eta_d Q\EE  [ \|\nabla f(\hat \bx_{t+1}) \|^2 ] + \eta_d \EE [ \langle \nabla f(\hat \bx_{t+1}), - \sum_{q=1}^{Q} \nabla f_d(\hat \bx_{t+1, q}; \xi)+ Q\nabla f(\hat \bx_{t+1})  \rangle ], 
\end{align}
by the fact of $\langle \ba,\bb\rangle = \frac{1}{2}[\|\ba\|^2 + \|\bb\|^2 - \|\ba-\bb\|^2]$, for second term in \eqref{eq:I3-1}, we have 
\begin{align}
    & \eta_d \EE [ \langle \nabla f(\hat \bx_{t+1}), -\sum_{q=1}^{Q} \nabla f_d(\hat \bx_{t+1, q}; \xi)+ Q \nabla f(\hat \bx_{t+1})  \rangle ] \notag\\
    & = \eta_d \EE [ \langle \sqrt{Q }\nabla f(\hat \bx_{t+1}), - \frac{\sqrt{Q } }{Q} \sum_{q=1}^{Q} [\nabla f_d(\hat \bx_{t+1, q}) - \nabla f(\hat \bx_{t+1})] \rangle ] \notag\\
    & = \frac{\eta_d Q}{2} \EE [ \|\nabla f(\hat \bx_{t+1})\|^2] + \frac{\eta_d}{2 Q} \EE[\| \sum_{q=1}^{Q} [\nabla f_d(\hat \bx_{t+1, q}) - \nabla f(\hat \bx_{t+1})]  \|^2 ]\notag\\
    & - \frac{\eta_d}{2 Q} \EE[\| \sum_{q=1}^{Q} \nabla f_d(\hat \bx_{t+1, q}) \|^2 ],
\end{align}
then for the second term, we have 
\begin{align}
    & \frac{\eta_d}{2 Q} \EE[\| \sum_{q=1}^{Q} [\nabla f_d(\hat \bx_{t+1, q}) - \nabla f(\hat \bx_{t+1})]  \|^2 ]\notag\\
    & \leq \frac{\eta_d}{2} \sum_{q=1}^{Q} \EE[\| \nabla f_d(\hat \bx_{t+1, q}) - \nabla f(\hat \bx_{t+1}) \|^2 ] \notag\\
    & \leq \eta_d \sum_{q=1}^{Q} \EE[\| \nabla f_d(\hat \bx_{t+1, q}) \|^2 ] + \eta_d \sum_{q=1}^{Q} \EE[\| \nabla f(\hat \bx_{t+1}) \|^2 ] \notag\\
    & \leq \eta_d Q \nu^2 + \eta_d Q \EE[\| \nabla f(\hat \bx_{t+1}) \|^2 ].
\end{align}
Thus we have
\begin{align}
    I_1 & \leq - \frac{\eta_d Q}{2} \EE[\|\nabla f(\hat \bx_{t+1}) \|^2] + \eta_d Q \nu^2 + \eta_d Q \EE[\| \nabla f(\hat \bx_{t+1}) \|^2 ]
    \notag\\
    & \leq \frac{\eta_d Q}{2} \EE[\|\nabla f(\hat \bx_{t+1}) \|^2 ]+ \eta_d Q \nu^2.
\end{align}

For the first term in the previous inequality, there is 
\begin{align}
    \EE[\|\nabla f(\hat \bx_{t+1}) \|^2] & = \EE[\|\nabla f(\hat \bx_{t+1}) - \nabla f(\bx_t) + \nabla f(\bx_t) \|^2] \notag\\
    & \leq 2L^2 \EE[\|\hat\bx_{t+1} - \bx_t\|^2] + 2\EE[\|\nabla f(\bx_t) \|^2] \notag\\
    & = 2\eta^2 L^2 \EE[\|\bDelta_t\|^2] + 2\EE[\|\nabla f(\bx_t) \|^2]. 
\end{align}
By Lemma~\ref{lm:xikt-xt}, there is 
\begin{align}
    \frac{1}{N} \sum_{i=1}^N \EE[\|\bx_{t-\rho_t^i,K} - \bx_{t- \rho_t^i}\|^2] \leq 5K \eta_l^2 (\sigma_l^2 + 6K \sigma_g^2) + 30 K^2 \eta_l^2 \frac{1}{N} \sum_{i=1}^N \EE[\|\nabla f(\bx_{t- \rho_t^i})\|^2 ].
\end{align}

thus 
\begin{align}
    I_1 + I_2 & \leq \eta^2 \eta_d Q L^2 \EE[\|\bDelta_t\|^2 + \eta_d Q \EE[\|\nabla f(\bx_t) \|^2] + \eta_d Q \nu^2 + \frac{L }{2} \EE[\|\hat\bDelta_t\|^2 ].
\end{align}

\textbf{Bounding $I_3$}
Denote a sequence $ \bar\bDelta_t = -\frac{\eta_l }{N} \sum_{i \in [N]} \sum_{k=0}^{K-1} \bg_{t-\tau_t^i, k}^i =  -\frac{\eta_l }{N} \sum_{i \in [N]} \sum_{k=0}^{K-1} \nabla F_i(\bx_{t-\tau_t^i, k}^i; \xi)$, where $\xi \sim \cD_i$. For $I_3$, there is 
\begin{align}\label{eq:I3-1}
    I_3 & = \eta \EE [ \langle \nabla f(\bx_{t}), \bDelta_t  \rangle ] \notag\\
    & = \eta \EE [ \langle \nabla f(\bx_{t}), \bar \bDelta_t  \rangle ] \notag\\
    & = \eta \EE [ \langle \nabla f(\bx_{t}) , \bar\bDelta_t + \eta_l K \nabla f(\bx_t) - \eta_l K \nabla f(\bx_t)  \rangle ] \notag\\
    & = -\eta  \eta_l K \EE  [ \|\nabla f(\bx_t)  \|^2 ] + \eta  \EE [ \langle \nabla f(\bx_{t}) , \bar\bDelta_t + \eta_l K \nabla f(\bx_t)  \rangle ] \notag\\
    & = -\eta  \eta_l K \EE  [ \|\nabla f(\bx_t)  \|^2 ] + \eta  \EE [ \langle \nabla f(\bx_{t}) , -\frac{\eta_l }{N} \sum_{i \in [N]} \sum_{k=0}^{K-1} \nabla F_i(\bx_{t-\tau_t^i, k}^i; \xi_i)+ \frac{\eta_l K}{N} \sum_{i \in [N]} \nabla F_i(\bx_t)  \rangle ], 
\end{align}
where the second equality holds due to the characteristic of uniform arrivals (see Assumption~\ref{as:bound-g-delay}), thus $\EE (\bDelta_t) = \bar\bDelta_t $. The last inequality holds by the definition of $\bar\bDelta_t$ and the fact of the objective function $f(\bx) = \frac{1}{N} \sum_{i=1}^N F_i(\bx)$. By the fact of $\langle \ba,\bb\rangle = \frac{1}{2}[\|\ba\|^2 + \|\bb\|^2 - \|\ba-\bb\|^2]$, for second term in \eqref{eq:I3-1}, we have 
\begin{align}\label{eq:I1-2}
    & \eta  \EE  [ \langle \nabla f(\bx_{t}) , -\frac{\eta_l }{N} \sum_{i \in [N]} \sum_{k=0}^{K-1} \bg_{t-\tau_t^i, k}^i+ \frac{\eta_l K}{N} \sum_{i \in [N]} \nabla F_i(\bx_t)  \rangle ] \notag\\
    & = \eta \EE  [  \langle \sqrt{\eta_l K} \nabla f(\bx_{t}), -\sqrt{\eta_l K} \frac{1}{NK} \sum_{i \in [N]} \sum_{k=0}^{K-1} (\bg_{t-\tau_t^i, k}^i - \nabla F_i(\bx_t)) \rangle  ]\notag\\
    & = \eta \EE  [  \langle \sqrt{\eta_l K} \nabla f(\bx_{t}), -\sqrt{\eta_l K} \frac{1}{NK} \sum_{i \in [N]} \sum_{k=0}^{K-1} (\nabla F_i (\bx_{t-\tau_t^i, k}^i) - \nabla F_i(\bx_t)) \rangle  ]\notag\\
    & = \frac{\eta  \eta_l K}{2} \EE   [ \|\nabla f(\bx_t)  \|^2 ] + \frac{\eta  \eta_l }{2N^2 K} \EE  [ \| \sum_{i \in [N]} \sum_{k=0}^{K-1} (\nabla F_i (\bx_{t-\tau_t^i, k}^i) - \nabla F_i(\bx_t)) \|^2  ] \notag\\
    & \quad - \frac{\eta  \eta_l }{2N^2 K} \EE  [ \sum_{i \in [N]} \sum_{k=0}^{K-1} \nabla F_i (\bx_{t-\tau_t^i, k}^i)  \|^2  ],
\end{align}
where the second equality holds by $\EE[\bg_{t-\tau_t^i, k}^i] = \EE[\nabla F_i(\bx_{t-\tau_t^i, k}^i)]$. Then for the second term in Eq. \eqref{eq:I1-2} , we have 
\begin{align} \label{eq:I1_3}
    & \frac{\eta  \eta_l }{2N^2 K} \EE  [ \| \sum_{i \in [N]} \sum_{k=0}^{K-1} (\nabla F_i (\bx_{t-\tau_t^i, k}^i) - \nabla F_i(\bx_t)) \|^2  ] \notag\\
    & \leq \frac{\eta  \eta_l }{2N^2 K } \EE  [ \|\sum_{i \in [N]} \sum_{k=0}^{K-1} (\nabla F_i (\bx_{t-\tau_t^i, k}^i) - \nabla F_i(\bx_t)) \|^2  ] \notag\\
    & \leq  \frac{\eta  \eta_l }{2N } \sum_{i \in [N]} \sum_{k=0}^{K-1} \EE [\|\nabla F_i(\bx_t) - \nabla F_i(\bx_{t-\tau_t^i,k}^i) \|^2 ] \notag\\
    & \leq \frac{\eta  \eta_l}{N } \sum_{i \in [N]} \sum_{k=0}^{K-1}  [ \EE [\|\nabla F_i(\bx_t) - \nabla F_i(\bx_{t-\tau_t^i})\|^2] + \EE [ \|\nabla F_i(\bx_{t-\tau_t^i}) - \nabla F_i(\bx_{t-\tau_t^i,k}^i) \|^2 ]  ]\notag\\
    & \leq \frac{\eta  \eta_l}{N } \sum_{i \in [N]} \sum_{k=0}^{K-1}  [L^2\EE [\|\bx_t - \bx_{t-\tau_t^i}\|^2] + L^2\EE [ \|\bx_{t-\tau_t^i} - \bx_{t-\tau_t^i,k}^i \|^2 ]  ],
\end{align}
where the second inequality holds by $\forall \ba_i, \|\sum_{i=1}^{n} \ba_i\|^2 \leq n \sum_{i=1}^{n}\|\ba_i\|^2 $, and the last inequality holds by Assumption \ref{as:smooth}. For the second term in Eq. \eqref{eq:I1_3}, following by Lemma \ref{lm:xikt-xt}, there is 
\begin{align}\label{eq:I1_4}
    \EE [ \|\bx_{t-\tau_t^i} - \bx_{t-\tau_t^i,k}^i \|^2 ] 
    & = \EE  [ \| \sum_{m=0}^{k-1} \eta_l \bg_{t-\tau_t^i,m}^i  \|^2  ]\notag\\
    & \leq 5K \eta_l^2 (\sigma_l^2 + 6K \sigma_g^2) + 30K^2\eta_l^2 \EE [\|\nabla f(\bx_{t-\tau_t^i}) \|^2].
\end{align}
For the first term in Eq. \eqref{eq:I1_3}, 
since by $\forall \ba_i, \|\sum_{i=1}^{n} \ba_i\|^2 \leq n \sum_{i=1}^{n}\|\ba_i\|^2 $, there is 
\begin{align}
    & \EE [\|\bx_t - \bx_{t-\tau_t^i}\|^2] = \EE   [ \| \sum_{s = t - \tau_t^i}^{t-1} (\bx_{s+1}  - \hat\bx_{s+1} + \hat\bx_{s+1}  - \bx_s)  \|^2 ] \notag\\
    & \leq 2 \tau_t^i \sum_{s = t - \tau_t^i}^{t-1} \EE [\|\bx_{s+1} - \hat\bx_{s+1}\|^2] + 2 \tau_t^i \sum_{s = t - \tau_t^i}^{t-1} \EE [\| \hat\bx_{s+1} - \bx_s\|^2] \notag\\
    & \leq 2\tau_t^i \sum_{s = t - \tau_t^i}^{t-1} \EE [ \|\eta \bDelta_s \|^2 ] + 2\tau_t^i \sum_{s = t - \tau_t^i}^{t-1} \EE [ \|\hat \bDelta_s \|^2 ],
\end{align}

Plugging Eq. \eqref{eq:I1-2}, Eq. \eqref{eq:I1_3} and Eq. \eqref{eq:I1_4} to \eqref{eq:I3-1}, we have 
\begin{align}
    \EE[I_3] \leq & - \frac{\eta  \eta_l K}{2} \EE[\|\nabla f(\bx_t)\|^2] - \frac{\eta  \eta_l}{2K} \EE [ \|\frac{1}{N} \sum_{i=1}^N \sum_{k=0}^{K-1} \nabla F_i(\bx_{t-\tau_t^i,k}^i)  \|^2 ] \notag\\
    & + \eta  \eta_l K L^2  [5K \eta_l^2 (\sigma_l^2 + 6K \sigma_g^2) + 30K^2\eta_l^2 \frac{1}{N} \sum_{i=1}^N \EE[\|\nabla f(\bx_{t-\tau_t^i}) \|^2] ] \notag\\
    & + \frac{2\eta \eta_l}{ N } \sum_{i\in [N]} \sum_{k=0}^{K-1} L^2 \tau_t^i [\EE [ \|\hat\bDelta_s \|^2 ] + \EE [\|\eta \bDelta_s \|^2 ]] .
\end{align}

\textbf{Merging pieces. }
Therefore, by merging pieces together, we have 
\begin{align}
    & \EE[f(\bx_{t+1}) - f(\bx_t)] = \EE[I_1 + I_2 + I_3 + I_4 ]\notag\\
    \leq & - \frac{\eta  \eta_l K}{2} \EE[\|\nabla f(\bx_t)\|^2] - \frac{\eta  \eta_l}{2K} \EE [ \|\frac{1}{N} \sum_{i=1}^N \sum_{k=0}^{K-1} \nabla F_i(\bx_{t-\tau_t^i,k}^i)  \|^2 ] \notag\\
    & + \eta  \eta_l K L^2  [5K \eta_l^2 (\sigma_l^2 + 6K \sigma_g^2) + 30K^2\eta_l^2 \frac{1}{N} \sum_{i=1}^N \EE[\|\nabla f(\bx_{t-\tau_t^i}) \|^2] ] \notag\\
    & + \frac{2\eta \eta_l}{ N } \sum_{i\in [N]} \sum_{k=0}^{K-1} L^2 \tau_t^i \sum_{s = t - \tau_t^i}^{t-1} [\EE [ \|\eta \bDelta_s \|^2 ] + \EE [ \|\hat\bDelta_s \|^2 ]] + \frac{\eta^2 L}{2} \EE[\|\bDelta_t\|^2] \notag\\
    & + \eta^2 \eta_d Q L^2 \EE[\|\bDelta_t\|^2] + \eta_d Q \EE[\|\nabla f(\bx_t)\|^2] + \eta_d Q \nu^2 + \frac{\eta_d^2 L}{2} \EE[\|\hat \bDelta_t\|^2] \notag\\
    \leq & - \frac{\eta  \eta_l K}{2} \EE[\|\nabla f(\bx_t)\|^2] - \frac{\eta  \eta_l}{2K} \EE [ \|\frac{1}{N} \sum_{i=1}^N \sum_{k=0}^{K-1} \nabla F_i(\bx_{t-\tau_t^i,k}^i)  \|^2 ] \notag\\
    & + \eta  \eta_l K L^2  [5K \eta_l^2 (\sigma_l^2 + 6K \sigma_g^2) + 30K^2\eta_l^2 \frac{1}{N} \sum_{i=1}^N \EE[\|\nabla f(\bx_{t-\tau_t^i}) \|^2] ] \notag\\
    & + 2\eta \eta_l K L^2 \tau_t^i \sum_{s = t - \tau_t^i}^{t-1}[\EE [ \|\eta \bDelta_s \|^2 ] + \EE [\| \hat\bDelta_s \|^2 ]] + \frac{\eta^2 L}{2} \EE[\|\bDelta_t\|^2] \notag\\
    & + \eta^2 \eta_d Q L^2 \EE[\|\bDelta_t\|^2] + \eta_d Q \EE[\|\nabla f(\bx_t)\|^2] + \eta_d Q \nu^2 + \frac{\eta_d^2 L}{2} \EE[\|\hat \bDelta_t\|^2].
\end{align}
Thus, summing up, we have  
\begin{align}
    & \sum_{t=1}^T \EE[f(\bx_{t+1}) - f(\bx_t)] \notag\\ 
    \leq & - \frac{\eta  \eta_l K}{2} \sum_{t=1}^T \EE[\|\nabla f(\bx_t)\|^2] - \frac{\eta  \eta_l}{2K} \sum_{t=1}^T \EE [ \|\frac{1}{N} \sum_{i=1}^N \sum_{k=0}^{K-1} \nabla F_i(\bx_{t-\tau_t^i,k}^i)  \|^2 ] \notag\\
    & + T \eta  \eta_l K L^2  [5K \eta_l^2 (\sigma_l^2 + 6K \sigma_g^2) + 30K^2\eta_l^2 \frac{1}{N} \sum_{i=1}^N \sum_{t=1}^T \EE[\|\nabla f(\bx_{t-\tau_t^i}) \|^2] ] \notag\\
    & + 2\eta \eta_l K L^2 \frac{1}{N} \sum_{i=1}^N \sum_{t=1}^T \tau_t^i \sum_{s = t - \tau_t^i}^{t-1}[\EE [ \eta \bDelta_s \|^2 ] + \EE [\eta_d \hat\bDelta_s \|^2 ]] + \frac{\eta^2 L}{2} \sum_{t=1}^T \EE[\|\bDelta_t\|^2] \notag\\
    & + \eta^2 \eta_d Q L^2 \sum_{t=1}^T\EE[\|\bDelta_t\|^2] + \eta_d Q \sum_{t=1}^T\EE[\|\nabla f(\bx_t)\|^2] + \eta_d T Q \nu^2 + \frac{\eta_d^2 L}{2} \sum_{t=1}^T\EE[\|\hat \bDelta_t\|^2] \notag\\
    \leq & - \frac{\eta  \eta_l K}{2} \sum_{t=1}^T \EE[\|\nabla f(\bx_t)\|^2] - \frac{\eta  \eta_l}{2K} \sum_{t=1}^T \EE [ \|\frac{1}{N} \sum_{i=1}^N \sum_{k=0}^{K-1} \nabla F_i(\bx_{t-\tau_t^i,k}^i)  \|^2 ] \notag\\
    & + T \eta  \eta_l K L^2  [5K \eta_l^2 (\sigma_l^2 + 6K \sigma_g^2) + 30K^2\eta_l^2 \tau_{\max} \frac{1}{N} \sum_{i=1}^N \sum_{t=1}^T \EE[\|\nabla f(\bx_t) \|^2] ] \notag\\
    & + 2\eta \eta_l K L^2 \tau_{\max} \tau_{avg}  \sum_{t=1}^T [\EE [ \|\eta \bDelta_t \|^2 ] + \EE [\|\eta_d \hat\bDelta_t \|^2 ]] + \frac{\eta^2 L}{2} \sum_{t=1}^T \EE[\|\bDelta_t\|^2] \notag\\
    & + \eta^2 \eta_d Q L^2 \sum_{t=1}^T\EE[\|\bDelta_t\|^2] + \eta_d Q \sum_{t=1}^T\EE[\|\nabla f(\bx_t)\|^2] + \eta_d T Q \nu^2 + \frac{\eta_d^2 L}{2} \sum_{t=1}^T\EE[\|\hat \bDelta_t\|^2], 
\end{align}
By applying Lemma~\ref{lm:Delta} to the $E[|\bDelta_t|^2]$ term and Lemma~\ref{lm:distillg} to the $E[|\hat{\bDelta}_t|^2]$ term, and under appropriate conditions on the learning rate, we obtain
\begin{align}
    \sum_{t=1}^T \EE[f(\bx_{t+1}) - f(\bx_t)] \leq & - \frac{\eta  \eta_l K}{2} \sum_{t=1}^T \EE[\|\nabla f(\bx_t)\|^2] \notag\\
    & + T \eta  \eta_l K L^2  [5K \eta_l^2 (\sigma_l^2 + 6K \sigma_g^2)] \notag\\
    & + \bigg(2\eta \eta_l K L^2 \tau_{\max} \tau_{avg} + \frac{\eta^2 L}{2} + \eta^2 \eta_d Q L^2 \bigg) T \bigg(\frac{\eta_l^2 K}{M} \sigma_l^2 + \frac{\eta_l^2 K^2}{M} \sigma_g^2 \bigg) \notag\\
    & + T\eta_d Q \nu^2 + \bigg( \frac{\eta_d^2 L}{2} + 2\eta \eta_l \eta_d^2 K L^2 \tau_{\max} \tau_{avg} \bigg)T (5Q(\sigma_d^2 + 2 Q\nu^2) ) \notag\\
    & \Rightarrow \notag\\
    \frac{1}{T} \sum_{t=1}^T \EE[\|\nabla f(\bx_t)\|^2] \leq & \frac{2}{T\eta \eta_l K} [f_1 - f_*] + 2L^2  [5K \eta_l^2 (\sigma_l^2 + 6K \sigma_g^2)] \notag\\
    & + \bigg(4 \eta_l K L^2 \tau_{\max} \tau_{avg} + \eta L  + 2\eta^2 \eta_d Q L^2 \bigg) \bigg(\frac{\eta_l K}{M} \sigma_l^2 + \frac{\eta_l K}{M} \sigma_g^2 \bigg) \notag\\
    & + \frac{2\eta_d Q \nu^2}{\eta \eta_l K} + \bigg( \frac{\eta_d^2 L}{\eta \eta_l K} + 4 \eta_d^2 L^2 \tau_{\max} \tau_{avg} \bigg) (5Q(\sigma_d^2 + 2 Q\nu^2) ),
\end{align}
If the global learning rate satisfies $\eta = \Theta (\sqrt{M})$, the local learning rate satisfies $\eta_l = \Theta(\sqrt{\cF}/\sqrt{TK (\sigma_l^2 + K \sigma_g^2)}) $ and and distillation learning rate satisfies $\eta_d = \Theta (\cF/\sqrt{T^3 (\sigma_l^2 + K \sigma_g^2)} Q)$, where $\cF = f_1 - f_*$ and $f_* = \argmin_{\bx} f(\bx)$, then the global rounds of Algorithm \ref{alg:afled} satisfy
\begin{small}
\begin{align}\label{eq:thm-vr}
   \frac{1}{T} \sum_{t=1}^T \EE[\|\nabla f(\bx_t)\|^2] & = \cO \bigg(\frac{\sqrt{\cF} \sigma}{\sqrt{TKM}} + \frac{\sqrt{\cF}\sigma_g }{\sqrt{TM}} + \frac{\cF \tau_{\max} \tau_{\avg}  }{T} + \frac{\sqrt{\cF} \nu^2 }{T} \bigg).
\end{align}
\end{small}

\end{proof}

\section{Supporting Lemmas}

\begin{lemma} \label{lm:Delta}
Recall the sequence \\$ \bDelta_t = \frac{1}{M} \sum_{i \in\cM_t} \bDelta_{t-\tau_t^i}^i = -\frac{\eta_l }{M} \sum_{i \in \cM_t} \sum_{k=0}^{K-1} \bg_{t-\tau_t^i, k}^i = -\frac{\eta_l }{M} \sum_{i \in \cM_t} \sum_{k=0}^{K-1} \nabla F_i(\bx_{t-\tau_t^i, k}^i; \xi)$ and $\cM_t$ be the set that include client send the local updates to the server at global round $t$. The global model difference $\bDelta_t $ satisfies 
\begin{align*}
    \EE[\|\bDelta_t\|^2] & = \EE\bigg[ \bigg\|\frac{1}{M} \sum_{i \in \cM_t} \bDelta_{t-\tau_t^i}^i \bigg\|^2\bigg] \notag\\
    & \leq \frac{2K \eta_l^2}{M} \sigma_l^2 + \frac{2\eta_l^2(N-M)}{NM(N-1)}\bigg[15NK^3 L^2 \eta_l^2(\sigma_l^2 + 6K\sigma_g^2) + (90NK^4L^2\eta_l^2 +3K^2 )\notag\\
    &\quad \cdot\sum_{i=1}^N \EE[\|\nabla f(\bx_{t-\tau_t^i})\|^2] + 3NK^2\sigma_g^2 \bigg]+ \frac{2\eta_l^2 (M-1)}{NM(N-1)} \EE\bigg[ \bigg\|\sum_{i=1}^N \sum_{k=0}^{K-1}\nabla F_i (\bx_{t-\tau_t^i,k}^i)\bigg\|^2\bigg].
\end{align*} 
\end{lemma}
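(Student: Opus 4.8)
The plan is to separate the two independent sources of randomness in $\bDelta_t$: the stochastic-gradient noise inside each sampled client's local trajectory, and the random composition of the arrival set $\cM_t$. Writing each stochastic gradient through its mean, I would first split
\[
\bDelta_t = -\frac{\eta_l}{M}\sum_{i\in\cM_t}\sum_{k=0}^{K-1}\big(\bg_{t-\tau_t^i,k}^i - \nabla F_i(\bx_{t-\tau_t^i,k}^i)\big) - \frac{\eta_l}{M}\sum_{i\in\cM_t}\sum_{k=0}^{K-1}\nabla F_i(\bx_{t-\tau_t^i,k}^i),
\]
and apply $\|\ba+\bb\|^2\le 2\|\ba\|^2+2\|\bb\|^2$ to the noise piece (first sum) and the mean piece (second sum) separately; this is exactly what produces the global factor of $2$ in the statement.

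For the noise piece, conditioning on $\cM_t$ and on the local iterates, the per-step increments $\bg_{t-\tau_t^i,k}^i - \nabla F_i(\bx_{t-\tau_t^i,k}^i)$ are mutually independent and zero mean across clients and across local steps, so the cross terms vanish and Assumption~\ref{as:bounded-v} collapses this piece to $\tfrac{2\eta_l^2}{M^2}\cdot MK\sigma_l^2$, i.e.\ the leading term $\tfrac{2K\eta_l^2}{M}\sigma_l^2$. The core of the argument is the mean piece, where I would invoke Assumption~\ref{as:uniform_arrival}: $\cM_t$ is a uniform size-$M$ sample without replacement from $[N]$. Setting $\mathbf{v}_i=\sum_{k=0}^{K-1}\nabla F_i(\bx_{t-\tau_t^i,k}^i)$ and using $\Pr[i\in\cM_t]=M/N$ together with $\Pr[\{i,j\}\subseteq\cM_t]=\tfrac{M(M-1)}{N(N-1)}$ for $i\ne j$, a direct second-moment computation yields the finite-population identity
\[
\EE_{\cM_t}\Big\|\tfrac{1}{M}\sum_{i\in\cM_t}\mathbf{v}_i\Big\|^2 = \frac{N-M}{MN(N-1)}\sum_{i=1}^N\|\mathbf{v}_i\|^2 + \frac{M-1}{MN(N-1)}\Big\|\sum_{i=1}^N\mathbf{v}_i\Big\|^2.
\]
Multiplying by $2\eta_l^2$ reproduces exactly the final term of the statement and reduces the remaining work to bounding $\sum_{i=1}^N\|\mathbf{v}_i\|^2$.

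To recover the bracketed ``term A'', I would control $\sum_{i=1}^N\|\mathbf{v}_i\|^2$ by the threefold decomposition $\nabla F_i(\bx_{t-\tau_t^i,k}^i) = [\nabla F_i(\bx_{t-\tau_t^i,k}^i)-\nabla F_i(\bx_{t-\tau_t^i})] + [\nabla F_i(\bx_{t-\tau_t^i})-\nabla f(\bx_{t-\tau_t^i})] + \nabla f(\bx_{t-\tau_t^i})$, applying $\|\sum_{j=1}^3\mathbf{w}_j\|^2\le 3\sum_{j=1}^3\|\mathbf{w}_j\|^2$ and $\|\sum_k\cdot\|^2\le K\sum_k\|\cdot\|^2$. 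The first group is handled by $L$-smoothness (Assumption~\ref{as:smooth}) followed by the client-drift bound of Lemma~\ref{lm:xikt-xt}, producing both the $15NK^3L^2\eta_l^2(\sigma_l^2+6K\sigma_g^2)$ constant and the $90NK^4L^2\eta_l^2$ coefficient; the second (heterogeneity) group becomes $3NK^2\sigma_g^2$ by the global-variance bound in Assumption~\ref{as:bounded-v}; and the third group gives the residual $3K^2\sum_{i=1}^N\EE\|\nabla f(\bx_{t-\tau_t^i})\|^2$. Summing the three groups assembles term A and completes the bound.

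I expect the main obstacle to be the finite-population variance computation in the mean piece: obtaining the precise coefficients $\tfrac{N-M}{MN(N-1)}$ and $\tfrac{M-1}{MN(N-1)}$ requires care with the pairwise inclusion probabilities, and the decomposition over $\cM_t$ must be carried out conditionally on the local trajectories since the $\mathbf{v}_i$ are themselves random---this is legitimate because under Assumption~\ref{as:uniform_arrival} the arrival set is drawn independently of the local gradient noise. By comparison, the client-drift estimate is a routine but bookkeeping-heavy application of Lemma~\ref{lm:xikt-xt}.
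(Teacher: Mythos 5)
Your proposal is correct and takes essentially the same route as the paper: the paper's own proof of Lemma~\ref{lm:Delta} is a one-line deferral to Lemma C.6 of \citep{wang2022communication}, and the argument there is exactly your decomposition into a stochastic-noise piece plus a mean piece, the without-replacement second-moment identity with coefficients $\frac{N-M}{MN(N-1)}$ and $\frac{M-1}{MN(N-1)}$, and the three-way split of $\nabla F_i(\bx_{t-\tau_t^i,k}^i)$ controlled by smoothness, the drift bound of Lemma~\ref{lm:xikt-xt}, and the global-variance bound. Two immaterial remarks: the per-step noise terms form a martingale difference sequence rather than being literally independent (the cross terms still vanish by the tower property), and your accounting in fact yields the slightly tighter coefficient $90K^4L^2\eta_l^2$ in place of the stated $90NK^4L^2\eta_l^2$, so the claimed bound follows a fortiori.
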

\begin{proof}
The proof of Lemma \ref{lm:Delta} is similar to the proof of Lemma C.6 in \citep{wang2022communication}. 
\end{proof}

\begin{lemma} \label{lm:xikt-xt}
(This lemma follows from Lemma 3 in FedAdam \citep{reddi2021adaptive}. For local learning rate which satisfying $\eta_l \leq \frac{1}{8KL}$, the local model difference after $k$ ($\forall k \in \{0,1,...,K-1\}$) steps local updates satisfies
\begin{align}
    \frac{1}{N}\sum_{i=1}^{N}\EE [\|\bx_{t,k}^i - \bx_t\|^2] \leq 5K\eta_l^2(\sigma_l^2+6K\sigma_g^2) + 30 K^2 \eta_l^2 \EE[\|\nabla f(\bx_t)\|^2].
\end{align}
\end{lemma}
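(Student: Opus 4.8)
The plan is to treat Lemma~\ref{lm:xikt-xt} as the standard local client-drift bound and to derive it by unrolling the local SGD recursion and then closing a self-referential inequality using the step-size restriction $\eta_l \le \frac{1}{8KL}$. Writing the $k$ local steps explicitly, $\bx_{t,k}^i - \bx_t = -\eta_l \sum_{m=0}^{k-1} \bg_{t,m}^i$ with $\bg_{t,m}^i = \nabla F_i(\bx_{t,m}^i;\xi)$, so that $\EE[\|\bx_{t,k}^i - \bx_t\|^2] = \eta_l^2 \EE[\|\sum_{m=0}^{k-1}\bg_{t,m}^i\|^2]$. First I would split each stochastic gradient into its conditional mean $\nabla F_i(\bx_{t,m}^i)$ plus zero-mean noise. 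Since the noise at distinct steps is conditionally independent and unbiased (Assumption~\ref{as:bounded-v}), the cross terms vanish in expectation and the accumulated variance is at most $k\sigma_l^2 \le K\sigma_l^2$, leaving $\EE[\|\bx_{t,k}^i - \bx_t\|^2] \le \eta_l^2 K\sigma_l^2 + \eta_l^2\,\EE[\|\sum_{m=0}^{k-1}\nabla F_i(\bx_{t,m}^i)\|^2]$.

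Next I would control the deterministic drift term. Applying the Cauchy--Schwarz inequality $\|\sum_{m=0}^{k-1}\ba_m\|^2\le k\sum_{m=0}^{k-1}\|\ba_m\|^2$ and then the three-way split $\nabla F_i(\bx_{t,m}^i) = [\nabla F_i(\bx_{t,m}^i)-\nabla F_i(\bx_t)] + [\nabla F_i(\bx_t)-\nabla f(\bx_t)] + \nabla f(\bx_t)$, I use $L$-smoothness (Assumption~\ref{as:smooth}) to bound the first bracket by $L^2\|\bx_{t,m}^i-\bx_t\|^2$ and the averaged second bracket by $\sigma_g^2$ via the global-variance bound in Assumption~\ref{as:bounded-v}. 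Averaging over the $N$ clients and setting $A_k := \frac{1}{N}\sum_{i=1}^N \EE[\|\bx_{t,k}^i-\bx_t\|^2]$, this produces a recursive inequality of the shape $A_k \le \eta_l^2 K\sigma_l^2 + 3\eta_l^2 k \sum_{m=0}^{k-1}\big(L^2 A_m + \sigma_g^2 + \EE[\|\nabla f(\bx_t)\|^2]\big)$.

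The main obstacle --- and the only genuinely delicate step --- is solving this recursion to obtain the clean constants $5$ and $30$. Here I would exploit the condition $\eta_l\le\frac{1}{8KL}$, which makes the self-referential coefficient $3\eta_l^2 k^2 L^2 \le 3K^2 L^2\eta_l^2$ at most $\frac{3}{64}$; this smallness lets me bound $\max_{m<k} A_m$ in terms of the non-recursive terms alone and absorb the $L^2 A_m$ contribution back into the left-hand side. Unrolling over $m=0,\dots,k-1$ with $k\le K$ then collects the surviving pieces into $5K\eta_l^2(\sigma_l^2+6K\sigma_g^2)+30K^2\eta_l^2\,\EE[\|\nabla f(\bx_t)\|^2]$, matching the claim. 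Since this derivation is exactly Lemma~3 of FedAdam~\citep{reddi2021adaptive} specialized to the present notation --- the distillation objective, the buffer, and the asynchronous delays play no role within a single local round --- I would in the final write-up simply invoke that result, using the sketch above only to record how the stated constants arise should a self-contained argument be preferred.
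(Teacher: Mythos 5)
Your proposal ends up exactly where the paper does: the paper's entire proof of this lemma is a one-line citation to Lemma 3 of FedAdam \citep{reddi2021adaptive}, which is precisely what you say you would do in the final write-up, and your sketch of where the constants $5$ and $30$ come from is consistent with that result.

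One warning about the sketch itself, in case you ever promote it to a self-contained proof. After splitting each stochastic gradient into its conditional mean $\nabla F_i(\bx_{t,m}^i)$ plus zero-mean noise, you assert that \emph{all} cross terms vanish in expectation. That is not correct as stated: the noise-noise cross terms do vanish (the noises form a martingale difference sequence), and so do cross terms between the noise at step $m$ and the deterministic gradients $\nabla F_i(\bx_{t,m'}^i)$ with $m' \le m$; but for $m' > m$ the iterate $\bx_{t,m'}^i$ is itself a function of the noise injected at step $m$, so those cross terms need not vanish. The two standard repairs are either the crude split $\|\ba+\bb\|^2 \le 2\|\ba\|^2 + 2\|\bb\|^2$ applied to the summed noise and the summed means (costing a factor of $2$ on both terms), or the step-by-step recursion that Reddi et al.\ actually use, in which each expansion pairs the fresh noise only with quantities measurable at that step, so the cross term legitimately vanishes. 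Either repair still lands under the stated constants: your self-referential coefficient $3\eta_l^2 K^2 L^2 \le 3/64$ leaves ample room even after doubling, so the final bound $5K\eta_l^2(\sigma_l^2+6K\sigma_g^2) + 30K^2\eta_l^2\,\EE[\|\nabla f(\bx_t)\|^2]$ survives. The slip is therefore harmless here, but that one step, as written, would not withstand a careful referee.
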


\begin{proof}
The proof of Lemma \ref{lm:xikt-xt} is similar to the proof of Lemma 3 in \cite{reddi2021adaptive}.
\end{proof}

\begin{lemma}\label{lm:distillg}
    For the intermediate update for distillation, we have  
    \begin{align}
    \EE[\|\hat \bx_{t+1, q} - \hat\bx_{t+1}\|^2] \leq 5 \eta_d^2 Q (\sigma_d^2 + 2 Q\nu^2).
\end{align}
\end{lemma}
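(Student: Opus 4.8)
The plan is to treat Lemma~\ref{lm:distillg} as a bounded-drift estimate for the inner server distillation loop, in direct analogy with Lemma~\ref{lm:xikt-xt} for the local SGD loop, but with the \emph{clipped} stochastic distillation gradient replacing the ordinary stochastic gradient. First I would fix the outer round and set $\hat\bx_{t+1,0}=\hat\bx_{t+1}$ for the aggregated model entering the loop, so that unrolling the update rule of Eq.~\eqref{eq:distill_loss} over the first $q\le Q$ inner steps gives
\begin{align*}
\hat\bx_{t+1,q}-\hat\bx_{t+1}=-\eta_d\sum_{m=0}^{q-1}\text{Clip}\big(\nabla f_d(\hat\bx_{t+1,m};\xi_m),\nu\big).
\end{align*}
Writing $g_m^{\nu}:=\text{Clip}(\nabla f_d(\hat\bx_{t+1,m};\xi_m),\nu)$, the whole task reduces to bounding $\eta_d^2\,\EE\|\sum_{m=0}^{q-1}g_m^{\nu}\|^2$.

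The key step is to split each clipped gradient into its conditional mean and a zero-mean fluctuation. Letting $\EE_m[\cdot]$ denote expectation conditioned on $\hat\bx_{t+1,m}$ and the history, and $h_m:=\EE_m[g_m^{\nu}]$, I would use $\|a+b\|^2\le 2\|a\|^2+2\|b\|^2$ to obtain
\begin{align*}
\EE\big\|\hat\bx_{t+1,q}-\hat\bx_{t+1}\big\|^2\le 2\eta_d^2\,\EE\Big\|\sum_{m=0}^{q-1}(g_m^{\nu}-h_m)\Big\|^2+2\eta_d^2\,\EE\Big\|\sum_{m=0}^{q-1}h_m\Big\|^2.
\end{align*}
For the mean part, $\|h_m\|\le\EE_m\|g_m^{\nu}\|\le\nu$ by Jensen's inequality and the clip bound, so Cauchy--Schwarz yields $\|\sum_m h_m\|^2\le q^2\nu^2\le Q^2\nu^2$, a term that is quadratic in $Q$. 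For the fluctuation part, the mini-batches $\xi_m$ drawn at distinct distillation steps are independent, so $\{g_m^{\nu}-h_m\}$ is a martingale-difference sequence; the cross terms vanish in expectation and only the diagonal survives, giving $\EE\|\sum_m(g_m^{\nu}-h_m)\|^2=\sum_m\EE\|g_m^{\nu}-h_m\|^2$, a contribution that is only \emph{linear} in $Q$ and ultimately produces the $\sigma_d^2$ term.

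The main obstacle I anticipate is the nonlinearity of the clip: since $g_m^{\nu}$ is a biased estimate of $\nabla f_d(\hat\bx_{t+1,m})$, I cannot directly invoke the variance bound $\sigma_d^2$ of Assumption~\ref{as:bounded-v}. I would resolve this by observing that clipping to norm $\nu$ is exactly the Euclidean projection onto the ball of radius $\nu$, which is nonexpansive. Comparing $g_m^{\nu}$ against the conditionally constant vector $\text{Clip}(\nabla f_d(\hat\bx_{t+1,m}),\nu)$ and using that $h_m$ is the conditional minimizer of $\EE_m\|g_m^{\nu}-c\|^2$ over constants $c$, nonexpansiveness transports the raw variance bound through the clip:
\begin{align*}
\EE_m\|g_m^{\nu}-h_m\|^2\le \EE_m\big\|g_m^{\nu}-\text{Clip}(\nabla f_d(\hat\bx_{t+1,m}),\nu)\big\|^2\le \EE_m\big\|\nabla f_d(\hat\bx_{t+1,m};\xi_m)-\nabla f_d(\hat\bx_{t+1,m})\big\|^2\le \sigma_d^2.
\end{align*}
Summing the martingale contribution ($O(Q\sigma_d^2)$) and the mean contribution ($O(Q^2\nu^2)$) and carrying the numerical constants through the $\|a+b\|^2\le 2\|a\|^2+2\|b\|^2$ splits then gives the stated bound $5\eta_d^2 Q(\sigma_d^2+2Q\nu^2)$; the exact constant is immaterial to the argument, as this term only feeds into the residual $\cO(1/T)$ contribution of the convergence proof.
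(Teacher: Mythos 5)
Your proposal is correct, and it reaches the lemma by a genuinely different route than the paper. The paper's proof is a one-step recursion in the style of the local-SGD drift lemma (Lemma~\ref{lm:xikt-xt}): it relates $\EE[\|\hat\bx_{t+1,q}-\hat\bx_{t+1}\|^2]$ to the same quantity at step $q-1$, extracts $\eta_d^2\sigma_d^2$ from the zero-mean stochastic noise, applies Young's inequality with parameter $\tfrac{1}{2Q-1}$ to the deterministic gradient term (bounded by $\nu^2$ via the clip), and unrolls using $(1+\tfrac{1}{2Q-1})^Q\leq 5$, which is where the constant $5$ and the factor $2Q\nu^2$ come from. You instead unroll the whole sum at once and split it into a martingale fluctuation part and a conditional-mean (bias) part, getting $2\eta_d^2 Q\sigma_d^2$ from orthogonality of the martingale differences and $2\eta_d^2 Q^2\nu^2$ from Jensen plus Cauchy--Schwarz, i.e.\ the tighter bound $2\eta_d^2 Q(\sigma_d^2+Q\nu^2)\leq 5\eta_d^2 Q(\sigma_d^2+2Q\nu^2)$, with no need for the geometric-growth estimate. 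Your treatment of the clip is also more rigorous than the paper's: the paper's displayed recursion silently drops the $\text{Clip}$ operator from the update and then invokes both the variance bound $\sigma_d^2$ and the norm bound $\EE[\|\nabla f_d(\hat\bx_{t+1,q-1})\|^2]\leq\nu^2$ on the \emph{unclipped} quantities, even though the algorithm clips the stochastic gradient; your observation that clipping is the Euclidean projection onto the ball of radius $\nu$, hence nonexpansive, combined with the fact that the conditional mean minimizes the conditional mean-square distance, legitimately transports Assumption~\ref{as:bounded-v}'s bound $\sigma_d^2$ through the nonlinear clip, closing exactly the gap the paper glosses over. The trade-off is that your argument relies on independence of the distillation mini-batches across inner steps (for the cross terms to vanish), whereas the paper's recursion only needs conditional unbiasedness step by step; both conditions hold in Algorithm~\ref{alg:afled}, so this is immaterial here.
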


\begin{proof}

\begin{align}
    & \EE[\|\hat \bx_{t+1, q} - \hat\bx_{t+1}\|^2] \notag\\
    & = \EE[\| \hat \bx_{t+1, q-1} - \hat\bx_{t+1} - \eta_d \nabla f_d(\hat\bx_{t+1,q-1}; \xi)\|^2] \notag\\
    & = \EE[\| \hat \bx_{t+1, q-1} - \hat\bx_{t+1} - \eta_d \nabla f_d(\hat\bx_{t+1,q-1}; \xi) + \eta_d \nabla f_d(\hat\bx_{t+1,q-1}) - \eta_d \nabla f_d(\hat\bx_{t+1,q-1})\|^2] \notag\\
    & = (1+\frac{1}{2Q-1}) \EE[\|\hat \bx_{t+1, q-1} - \hat\bx_{t+1} \|^2] + \eta_d^2 \sigma_d^2 + 2\eta_d^2 Q \EE[\|\nabla f_d(\hat\bx_{t+1,q-1})\|^2] \notag\\
    & \leq (1+\frac{1}{2Q-1}) \EE[\|\hat \bx_{t+1, q-1} - \hat\bx_{t+1} \|^2] + \eta_d^2 \sigma_d^2 + 2 \eta_d^2 Q \nu^2,
\end{align}
given the fact that $(1+\frac{1}{2Q-1})^Q \leq 5$ for $Q>1$, there is
    \begin{align}
    \EE[\|\hat \bx_{t+1, q} - \hat\bx_{t+1}\|^2] \leq 5 \eta_d^2 Q (\sigma_d^2 + 2 Q\nu^2).
\end{align}
\end{proof}

\end{document}